\documentclass{article}

\usepackage{microtype}
\usepackage{graphicx}
\graphicspath{{./}}
\usepackage{subcaption}
\usepackage{booktabs}

\usepackage[hyphens]{url}
\usepackage{hyperref}

\usepackage{booktabs}
\usepackage{multirow}
\usepackage{xcolor}
\usepackage{colortbl}
\usepackage{tikz}
\usetikzlibrary{positioning, arrows.meta, decorations.pathreplacing, calc}
\usepackage[normalem]{ulem}

\expandafter\def\csname ver@algorithmic.sty\endcsname{2099/01/01}

\usepackage[preprint]{icml2026}

\usepackage{algpseudocode}

\usepackage{amsmath}
\usepackage{amssymb}
\usepackage{mathtools}
\usepackage{amsthm}

\usepackage[capitalize,noabbrev]{cleveref}

\theoremstyle{plain}
\newtheorem{theorem}{Theorem}[section]

\newtheorem{corollary}[theorem]{Corollary}
\theoremstyle{definition}

\theoremstyle{remark}

\icmltitlerunning{What Linear Probes Miss: Multi-View Probing for Weight-Space Learning}

\begin{document}

\twocolumn[
  \icmltitle{What Linear Probes Miss: Multi-View Probing for Weight-Space Learning}

  \icmlsetsymbol{equal}{*}

  \begin{icmlauthorlist}
    \icmlauthor{Eunwoo Heo}{equal,unist}
    \icmlauthor{Kyeongkook Seo}{equal,unist}
    \icmlauthor{Jaejun Yoo}{unist}
  \end{icmlauthorlist}

  \icmlaffiliation{unist}{Graduate School of Artificial Intelligence, Ulsan National Institute of Science and Technology, Ulsan, Republic of Korea}

  \icmlcorrespondingauthor{Jaejun Yoo}{jaejun.yoo@unist.ac.kr}

  \icmlkeywords{Weight Space Learning, Multi-view Probing, Kernel Methods, Model Identification}

  \vskip 0.3in
]

\printAffiliationsAndNotice{\icmlEqualContribution}

\newcommand{\Fref}[1]{Figure~\ref{#1}}
\newcommand{\Sref}[1]{Section~\ref{#1}}
\newcommand{\Tref}[1]{Table~\ref{#1}}
\newcommand{\Aref}[1]{Appendix~\ref{#1}}
\newcommand{\ours}{MVProbe}
\definecolor{oursrow}{RGB}{220,235,247}

\begin{abstract} 
The explosive growth of open-source model repositories has created a Model Jungle, where checkpoints are frequently shared without adequate documentation or metadata. While weight-space learning offers a pathway to identify and analyze these models directly from their parameters, processing full-scale weights is computationally prohibitive. Probing-based methods have emerged as a lightweight alternative, extracting permutation-equivariant representations via learnable probe vectors. 
However, existing probing methods are limited by a single-view design: they capture first-order structures but fail to encode the rich, higher-order correlation patterns inherent in row--column interactions.
To bridge this gap, we introduce \ours{}, a multi-perspective probing framework that synthesizes first-order signals with interaction-aware (Gram-based) views. 
Our approach is theoretically grounded; we analyze the scaling laws of different probing orders to derive a principled standardization and fusion strategy that ensures balanced contributions from all branches. 
On the Model Jungle benchmark, \ours{}\footnote{Code is available at \url{https://github.com/AI-hew-math/MVProbe}.} consistently outperforms the state-of-the-art ProbeX across diverse architectures, including discriminative backbones (ResNet, SupViT, MAE, DINO) and large-scale generative LoRA adapters (Stable Diffusion LoRA).
\end{abstract}

\section{Introduction}
\label{sec:intro}

With the rapid advancement of deep learning, users increasingly share task-specific fine-tuned model checkpoints on public platforms such as Hugging Face and CivitAI. This ecosystem enables users to leverage specialized models without incurring the high costs of large-scale training. However, identifying suitable models among millions of available checkpoints remains challenging, particularly when model documentation is incomplete or missing---approximately 23\% of models on Hugging Face~\cite{pepe2024hugging}. 
In such cases, key properties---including training data distribution or generalization capability---are not directly accessible, and assessing them typically requires costly evaluation, making reliable model selection difficult.

To address this challenge, \textit{weight-space learning} has emerged as a paradigm designed to infer model properties directly from parameters, independent of external metadata~\cite{eilertsen2020classifying,unterthiner2020predicting}.
However, direct learning from raw weights faces two key challenges: (i) processing flattened model weights is computationally burdensome, and (ii) extracting information from high-dimensional weight spaces is nontrivial.
To mitigate these issues, probing-based approaches analyze model behavior by injecting learnable probe vectors, rather than operating directly on the model weights~\cite{herrmann2024learning, kofinas2024graph, ProbeGen, ProbeX}.
While early methods showed promise on small-scale benchmarks (e.g., MNIST, CIFAR-10), they often struggle to scale~\cite{zhang2023neural, navon2301equivariant, kofinas2024graph, lim2312graph}. Recently, ProbeX~\cite{ProbeX} introduced single-layer probing, achieving scalability to large models by analyzing selected weight matrices.

Despite its efficiency, single-layer probing in existing methods relies on a single first-order projection along one probing direction, which can be insufficient to characterize weight matrices.
In particular, first-order projections exhibit inherent ambiguity: distinct weight matrices may induce indistinguishable probe responses, causing them to collapse to the same representation.
Moreover, a single linear view fails to capture higher-order structure, such as correlations expressed through row--column pairwise interactions.

In this work, we show that distinct weight matrices can yield identical representations under first-order probing, rendering them indistinguishable (Theorem~\ref{thm:second_order}). 
To overcome this limitation while retaining scalability, we introduce Multi-View Probe (\ours{}), a \emph{multi-perspective} framework that augments first-order views with interaction-aware (Gram-based) views.
Concretely, \ours{} extracts complementary features from both row and column perspectives at first and second order.
To ensure balanced fusion across views with differing intrinsic scales, we derive a principled per-sample standardization scheme grounded in our scaling analysis (Theorem~\ref{thm:scale}).

Our approach is motivated by two key insights. First, from the \textit{kernel methods} perspective, the Gram matrix encodes pairwise similarity structure that first-order projections cannot capture. By probing through these kernel matrices, we access rich correlation information about how rows (or columns) relate to each other. Second, from the \textit{landmark points} perspective in manifold learning~\cite{silva2003global,de2004sparse}, our probe vectors serve as learnable reference directions that define a coordinate system for the weight matrix. With four probing branches (row/column direct projections and their Gram-based counterparts),~\ours{} provides complementary views of the induced coordinate system, yielding a comprehensive characterization of weight matrix geometry.

We focus on the single-layer probing regime introduced by ProbeX, where each model is represented using the weight matrix from a representative layer.
We validate \ours{} on the Model Jungle dataset~\footnote{https://huggingface.co/ProbeX}, covering modern architectures such as ResNet-101 and ViT-Base, as well as Stable Diffusion LoRA checkpoints. \ours{} surpasses existing baselines by a significant margin across various architectures. Notably, \ours{} demonstrates robust performance across all layers where ProbeX fails to learn informative representations, producing near-random performance.
Furthermore, the Stable Diffusion LoRA experiments show that \ours{} extends beyond conventional vision-backbone checkpoints to generative-model LoRA checkpoints with substantially larger label sets.

We summarize our contributions as follows:
\begin{itemize}
\item  We characterize a fundamental limitation of first-order single-view probing, showing that distinct weight matrices can become indistinguishable under the same probe representation.
\item We propose \ours{}, a multi-perspective probing framework that combines first-order projections with Gram-based interaction views, together with a principled per-sample standardization scheme for balanced fusion across scales. 
\item \ours{} achieves state-of-the-art performance on Model Jungle and exhibits robust layer-wise behavior, including layers that are weakly informative for probing.
\end{itemize}
\newpage
\section{Related Work}
\label{sec:related_works}

\textbf{Weight-space learning. }Early works have focused on understanding neural networks and learning representations by predicting model properties, such as training dataset or generalization error using flattened parameters or weight statistics~\cite{eilertsen2020classifying, unterthiner2020predicting, schurholt2021self}. However, these methods often destroy critical structural relationships and ignore the inherent permutation equivariance of neural networks, where reordering hidden neurons yields a functionally equivalent network~\cite{navon2301equivariant, zhou2023permutation, zhao2025symmetry}. To address these challenges, probing-based methods have emerged as a dominant paradigm. By characterizing model weights through their input-output behavior, these methods achieve permutation equivariance naturally. ProbeGen~\cite{ProbeGen} introduces a probe generator to ensure probe vectors have a shared latent space. Neural Graph~\cite{kofinas2024graph} treats model weights as graphs to leverage graph neural networks while maintaining neural symmetry and structural relationships. 
Most recently, ProbeX~\cite{ProbeX} explores a large-scale benchmark of fine-tuned models---ResNet101~\cite{ResNet}, Supervised ViT~\cite{SupViT}, Masked Autoencoder~\cite{MAE}, and DINO~\cite{caron2021emerging}---for predicting the fine-tuning label set from model weights.

Weight-space learning has also expanded into diverse tasks, including LoRA weight classification~\cite{putterman2410learning}, model derivation recovery~\cite{horwitz2024origin}, weight generation~\cite{zeng2025generative, schurholt2022hyper}, and model merging~\cite{wortsman2022model, yadav2023ties}. While these tasks demonstrate the breadth of the field, they still face the fundamental limitation in capturing sufficient geometric information from weights.  Our work addresses this gap by introducing a multi-perspective probing framework that integrates first-order (row/column projections) and second-order (kernel) views, providing a more comprehensive representation than prior first-order-only approaches.

\textbf{Intrinsic representations.}
Kernel methods~\cite{scholkopf2002learning} are classic techniques that represent data through Gram matrices, which encode pairwise similarity structure between samples. By implicitly mapping data into high-dimensional feature spaces, the kernel trick allows linear algorithms to model complex non-linear structures. This representation has been widely used in machine learning, as it captures rich relational information through inner products in the induced feature space.

Closely related to our work are landmark-based representations, which approximate global data geometry using a small set of reference points commonly referred to as \textit{landmark points}~\cite{silva2003global,de2004sparse}. Landmark-based methods approximate pairwise relationships by measuring distances to a subset of representative points, enabling scalable geometric representations of high-dimensional data. These approaches share conceptual similarities with kernel methods, as both rely on relational information rather than explicit feature coordinates. In this paper, probe responses could be interpreted as landmark points of given weight matrices that serve as anchors for well-defined representation vectors.
Recent work on gradient processing further suggests that second-order geometric information---e.g., curvature captured by the Fisher Information Matrix---encodes signals inaccessible to first-order methods~\cite{gelberg2025gradmetanet}, motivating curvature-aware weight-space representations.
\section{Background}
\label{sec:background}
\subsection{Problem Definition}
We consider the task of weight-space learning, where the goal is to predict properties of neural networks directly from their weights. Formally, we have a dataset $\mathcal{D} = \{(\mathbf{W}^{(i)}, \mathbf{y}^{(i)})\}_{i=1}^{N}$, where $\mathbf{W}^{(i)}$ denotes the weights of the $i$-th neural network and 
$\mathbf{y}^{(i)} \in \{0,1\}^{C}$ denotes its attribute vector, where $C$ is the number of target classes.
We consider a single layer and denote its weight matrix by $\mathbf{X}\in\mathbb{R}^{m\times n}$, where $m$ and $n$ are the numbers of output and input neurons, respectively. $\mathbf{X}$ is taken to be the most informative layer, i.e., the single layer that yields the best classification performance when used alone.
Our goal is to predict the attribute vector $\mathbf{y}$ solely from $\mathbf{X}$.

\subsection{Probing for Weight-Space Learning}
A key challenge in weight-space learning is the permutation equivariance of neural networks: permuting neurons in hidden layers changes the weight representation but preserves the network function~\cite{navon2301equivariant}.
Probing-based methods~\cite{herrmann2024learning, kofinas2024graph, ProbeGen, ProbeX} address this by characterizing networks through their input-output behavior, which is inherently permutation-equivariant~\cite{herrmann2024learning,ProbeGen}.

Specifically, we learn a set of $r$ probe vectors $\mathbf{u}_1, \ldots, \mathbf{u}_r \in \mathbb{R}^n$, collected as columns of a probe matrix $\mathbf{U} \in \mathbb{R}^{n \times r}$. These are passed through the weight matrix to obtain probe responses:
\begin{equation}
    \mathbf{S} = \mathbf{XU} \in \mathbb{R}^{m \times r}.
\end{equation}
Each column $\mathbf{S}_{:,k} = \mathbf{X}\mathbf{u}_k \in \mathbb{R}^m$ captures the responses of the output neurons to the $k$-th probe vector.
These responses are then processed by an encoder $\phi$ to predict model attributes:
\begin{equation}
    \hat{\mathbf{y}} = \phi(\mathbf{S}).
\end{equation}
The probe vectors and encoder are jointly trained to minimize a task-specific loss.

\subsection{Limitations of First-Order Probing}
While probing yields representations that are equivariant to neuron permutations, the standard formulation $\mathbf{S} = \mathbf{XU}$ has two fundamental limitations: (1) Row--Column asymmetry and (2) Missing pairwise interaction structure. Note that row--column asymmetry below is distinct from permutation equivariance: while permutation equivariance refers to symmetry under independent permutations of input and output neurons, the row--column asymmetry is not a model symmetry but a property of which axis the probe interrogates.

\textbf{Row--Column asymmetry.} 
Standard probing computes first-order responses $\mathbf{S}=\mathbf{XU}\in\mathbb{R}^{m\times r}$, which aggregates each output neuron's incoming weights along a small number of probe directions. 
This yields a row-centric sketch: it captures how output neurons aggregate inputs along the probe directions, but it does not directly characterize the column-wise perspective---namely, the geometry on the input side and how input coordinates are coupled through the layer. Equivalently, if we interpret $\mathbf{X}$ as an adjacency matrix between input and output neurons, $\mathbf{XU}$ summarizes row-side aggregation patterns under the chosen probes, while variations on the column side that do not affect these aggregates remain unobserved. Consequently, probing can be blind to components of $\mathbf{X}$ that fall in the nullspace of $\mathbf{U}$: for any $\mathbf{Z}$ satisfying $\mathbf{ZU}=0$, we have $\mathbf{(X+Z)U}=\mathbf{XU}$, so the probe responses cannot distinguish $\mathbf{Z}$ from $\mathbf{X+Z}$.

\textbf{Missing pairwise interaction structure.}
First-order responses capture only direct interactions between $\mathbf{X}$ and individual probe directions. They do not explicitly capture pairwise relationships among output neurons or among input coordinates, which is crucial for diagnosing redundancy and collective modes---e.g., whether groups of neurons share similar input-weighting patterns or whether input coordinates are strongly coupled through the layer. Such behavior is primarily governed by the induced similarity structure (e.g., Gram matrices) and its spectral geometry (e.g., effective rank and conditioning), beyond what isolated directional responses can capture. 

These limitations suggest that an effective weight-space representation should (i) treat rows and columns symmetrically, and (ii) explicitly encode pairwise similarities among neurons/features.
\section{Method}
\label{sec:method}

\begin{figure*}[!htbp]
  \centering
  \includegraphics[width=\linewidth]{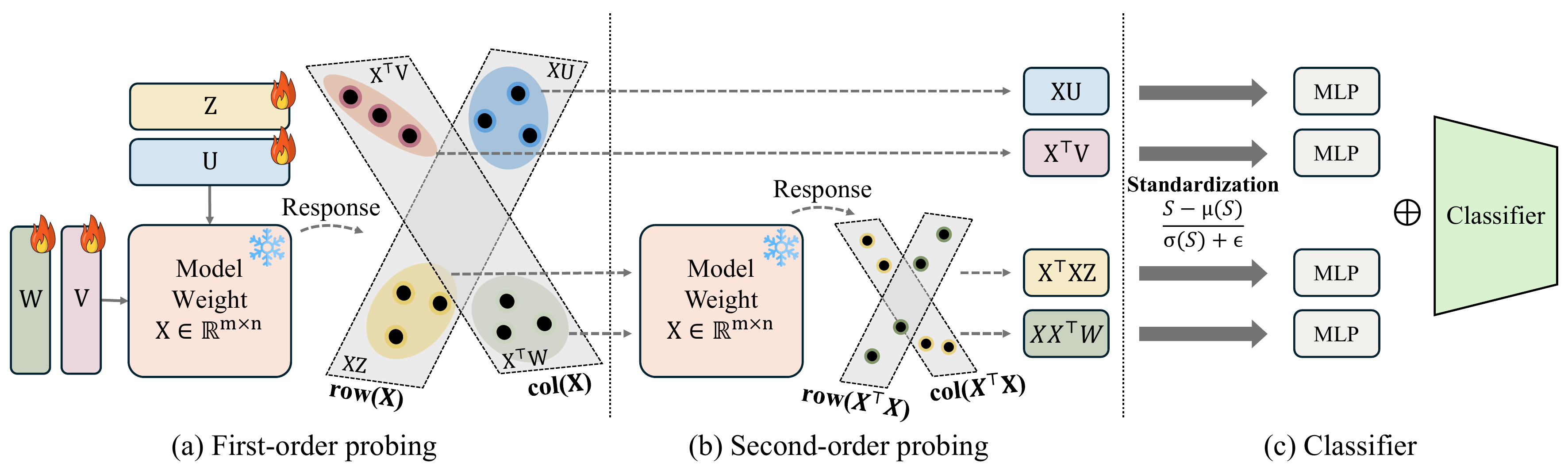}
  \caption{\textbf{Overview of \ours{}.} Given a weight matrix $\mathbf{X}\in\mathbb{R}^{m\times n}$, \ours{} extracts probe responses from four complementary views.
  (a) \emph{First-order probing:} learnable probes $\mathbf{U}$ and $\mathbf{V}$ produce row- and column-space responses $\mathbf{XU}$ and $\mathbf{X}^{\top}\mathbf{V}$. We also compute $\mathbf{XZ}$ and $\mathbf{X}^{\top}\mathbf{W}$ as intermediate projections for the second-order branches.
  (b) \emph{Second-order probing:} applying $\mathbf{X}^{\top}$ and $\mathbf{X}$ once more yields Gram-based responses $\mathbf{X}^{\top}\mathbf{X}\mathbf{Z}$ and $\mathbf{X}\mathbf{X}^{\top}\mathbf{W}$, capturing column- and row-wise correlation structure.
  (c) All responses are standardized, processed by branch MLPs, concatenated (\(\oplus\)), and fed into a shared encoder followed by a classifier head.}
  \label{fig:overview}
\end{figure*}

We propose Multi-View Probe (\ours{}), a multi-perspective probing framework that extracts complementary features from weight matrices via four branches. 
By combining first-order probing (direct row/column projections) with second-order probing through Gram operators, we obtain a richer representation that captures both local geometry and global correlation patterns. A schematic overview of \ours{} is shown in~\Fref{fig:overview}.

\subsection{Four-Branch Probing Architecture}
\label{subsec:branches}

We propose four complementary probing branches that capture different structural aspects of the weight matrix. Each branch uses its own learnable probe matrix and projection layer. A schematic illustration of these structural aspects is provided in Appendix~\ref{sec:Neuron}.

\subsubsection{First-Order Branches}

First-order branches directly probe the row and column structure of $\mathbf{X} \in \mathbb{R}^{m \times n}$.

\textbf{Row probing ($\mathbf{XU}$).}
With probe matrix $\mathbf{U} \in \mathbb{R}^{n \times r}$, we compute:
\begin{equation}
\mathbf{S}^{\text{row}} = \mathbf{XU} \in \mathbb{R}^{m \times r}.
\end{equation}
Each row $\mathbf{S}^{\text{row}}_{i,:}$ is an $r$-dimensional summary of the probe responses of the $i$-th output neuron.
This branch captures how each output neuron aggregates inputs along the probe vectors.

\textbf{Column probing ($\mathbf{X}^{\top}\mathbf{V}$).}
With a probe matrix $\mathbf{V}\in\mathbb{R}^{m\times r}$, we compute:
\begin{equation}
\mathbf{S}^{\text{col}}=\mathbf{X}^{\top}\mathbf{V}\in\mathbb{R}^{n\times r}.
\end{equation}
Each row $\mathbf{S}^{\text{col}}_{j,:}$ summarizes how the $j$-th input neuron connects to output neurons along the $r$ probe vectors.

\subsubsection{Second-Order (Kernel) Branches}

While first-order branches capture direct projections of probe vectors through $\mathbf{X}$, they may miss higher-order correlations between rows or columns. A natural way to capture such information is through Gram matrices derived from $\mathbf{X}$. We introduce second-order branches that probe through \textit{kernel matrices} (Gram matrices), which encode pairwise similarity structure. A formal connection to kernel methods is given in Appendix~\ref{app:kernel}.

\textbf{Row kernel probing ($\mathbf{X}\mathbf{X}^{\top}\mathbf{W}$).}
Let $\mathbf{W}\in\mathbb{R}^{m\times r}$ be a probe matrix.
With the row Gram matrix $\mathbf{K}_{\text{row}}=\mathbf{X}\mathbf{X}^{\top}\in\mathbb{R}^{m\times m}$,
we define the row-kernel response as
\begin{equation}
\mathbf{S}^{\text{row-kernel}}=\mathbf{K}_{\text{row}}\mathbf{W}\in\mathbb{R}^{m\times r}.
\end{equation}
The $k$-th probe column $\mathbf{w}_k$ of $\mathbf{W}$ assigns coefficients over output neurons (rows) of $\mathbf{X}$.
Its response is the $k$-th column of $\mathbf{S}^{\text{row-kernel}}$, i.e.,
$\mathbf{S}^{\text{row-kernel}}_{:,k}=\mathbf{K}_{\text{row}}\mathbf{w}_k\in\mathbb{R}^m$,
whose $i$-th entry is
\begin{equation}
(\mathbf{K}_{\text{row}}\mathbf{w}_k)_i
=\sum_{j=1}^m w_{j,k}\,\langle \mathbf{X}_{i,:},\mathbf{X}_{j,:}\rangle .
\end{equation} 
Here, $\langle \cdot,\cdot\rangle$ denotes the standard Euclidean inner product.

Equivalently,
\begin{equation}
\mathbf{K}_{\text{row}}\mathbf{w}_k
=\sum_{j=1}^m w_{j,k}\,(\mathbf{K}_{\text{row}})_{:,j},
\end{equation}
so $\mathbf{w}_k$ specifies a weighted combination of the \emph{similarity-to-row profiles} (the columns of $\mathbf{K}_{\text{row}}$).
Thus, for each neuron $i$, $(\mathbf{K}_{\text{row}}\mathbf{w}_k)_i$ aggregates its pairwise similarities to all output neurons.

\textbf{Column kernel probing ($\mathbf{X}^{\top}\mathbf{X}\mathbf{Z}$).}
The column Gram matrix $\mathbf{K}_{\text{col}}=\mathbf{X}^{\top}\mathbf{X}\in\mathbb{R}^{n\times n}$ encodes pairwise similarities between input features (columns) of $\mathbf{X}$.
Let $\mathbf{Z}\in\mathbb{R}^{n\times r}$ be a probe matrix and $\mathbf{z}_k$ its $k$-th column.
The column-kernel response is
\begin{equation}
\mathbf{S}^{\text{col-kernel}}=\mathbf{K}_{\text{col}}\mathbf{Z}\in\mathbb{R}^{n\times r},
\end{equation}
whose $p$-th entry for probe $k$ is
\begin{equation}
(\mathbf{K}_{\text{col}}\mathbf{z}_k)_p
=\sum_{q=1}^n z_{q,k}\,\langle \mathbf{X}_{:,p},\mathbf{X}_{:,q}\rangle.
\end{equation}
Equivalently,
\begin{equation}
\mathbf{K}_{\text{col}}\mathbf{z}_k
=\sum_{q=1}^n z_{q,k}\,(\mathbf{K}_{\text{col}})_{:,q},
\end{equation}
so $\mathbf{z}_k$ forms a weighted combination of \emph{similarity-to-column profiles} (the columns of $\mathbf{K}_{\text{col}}$).
Thus, for each input feature $p$, $(\mathbf{K}_{\text{col}}\mathbf{z}_k)_p$ aggregates its pairwise similarities to input features.

Compared to first-order probing (e.g., $\mathbf{X}\mathbf{U}$ or $\mathbf{X}^\top\mathbf{V}$), the kernel branches
$\mathbf{S}^{\text{row-kernel}}=\mathbf{K}_{\text{row}}\mathbf{W}$ and
$\mathbf{S}^{\text{col-kernel}}=\mathbf{K}_{\text{col}}\mathbf{Z}$
explicitly encode second-order correlation structure among output neurons (rows) and among input features (columns), respectively.

\subsection{Geometric Interpretation: Probes as Landmarks}
\label{subsec:landmark}

We provide a geometric interpretation of our four-branch architecture through the lens of \textit{landmark-based representations}~\cite{silva2003global,de2004sparse}. In manifold learning, landmark points serve as reference locations that define a coordinate system for high-dimensional data.

In our framework, probe vectors play an analogous role:
\begin{itemize}
    \item \textbf{First-order probes} define landmark \textit{directions} in the ambient space. The probe response $\mathbf{XU}$ measures how each row of $\mathbf{X}$ projects onto these directions.
    \item \textbf{Second-order probes} define landmark \textit{combinations} in the kernel space. The response $\mathbf{XX}^{\top}\mathbf{W}$ measures how each row relates to other rows through the lens of these landmark combinations.
\end{itemize}

By learning these landmarks end-to-end for the classification task, the model discovers reference points that are most informative for distinguishing between different weight matrix configurations.

\subsection{Architecture Details}
\label{subsec:architecture}

\textbf{Per-sample standardization.} 
We standardize each probe response matrix per sample:
\begin{equation}
\tilde{\mathbf{S}} = \frac{\mathbf{S} - \mu(\mathbf{S})}{\sigma(\mathbf{S}) + \epsilon},
\end{equation}
where $\mu(\mathbf{S})$ and $\sigma(\mathbf{S})$ are the mean and standard deviation computed over all elements of $\mathbf{S}$. 
This operation mitigates within-sample scale imbalance across branches and also stabilizes across-sample variations caused by differing weight magnitudes.

\textbf{Branch-specific projection.} Each branch has its own projection layer $\pi_i$ that maps the probe responses to a common dimension $d$:
\begin{equation}
\mathbf{f}_i = \text{ReLU}(\pi_i(\tilde{\mathbf{S}}^{(i)})) \in \mathbb{R}^{d},
\end{equation}
where $\tilde{\mathbf{S}}^{(i)}$ is flattened before projection.

\textbf{Shared encoder and classifier.}
Let $\mathbf{f}=[\mathbf{f}_1;\mathbf{f}_2;\mathbf{f}_3;\mathbf{f}_4]\in\mathbb{R}^{4d}$ denote the concatenation of the four branch features.
We first map $\mathbf{f}$ to a shared latent representation using a shared encoder $\psi:\mathbb{R}^{4d}\rightarrow\mathbb{R}^{d_h}$, and then produce class logits using a classifier head $\phi:\mathbb{R}^{d_h}\rightarrow\mathbb{R}^{C}$:
\begin{equation}
\hat{\mathbf{y}} = \phi(\psi(\mathbf{f})) \in \mathbb{R}^{C},
\end{equation}
where $d_h$ is the hidden dimension and $C$ is the number of classes.

\subsection{Training Objective}
\label{subsec:objective}

We train the model with a standard multi-label classification loss:
\begin{equation}
\mathcal{L} = \mathcal{L}_{\text{BCE}}(\hat{\mathbf{y}}, \mathbf{y}),
\end{equation}
where $\mathcal{L}_{\text{BCE}}$ is the binary cross-entropy loss and $\mathbf{y}$ is the ground-truth label vector.

\subsection{Theoretical Analysis}
\label{subsec:theory}

We provide theoretical justifications for two central components of our approach: (1) necessity of second-order branches, and (2) importance of per-sample standardization.

\begin{figure}[!htbp]
  \centering
  \includegraphics[width=\linewidth]{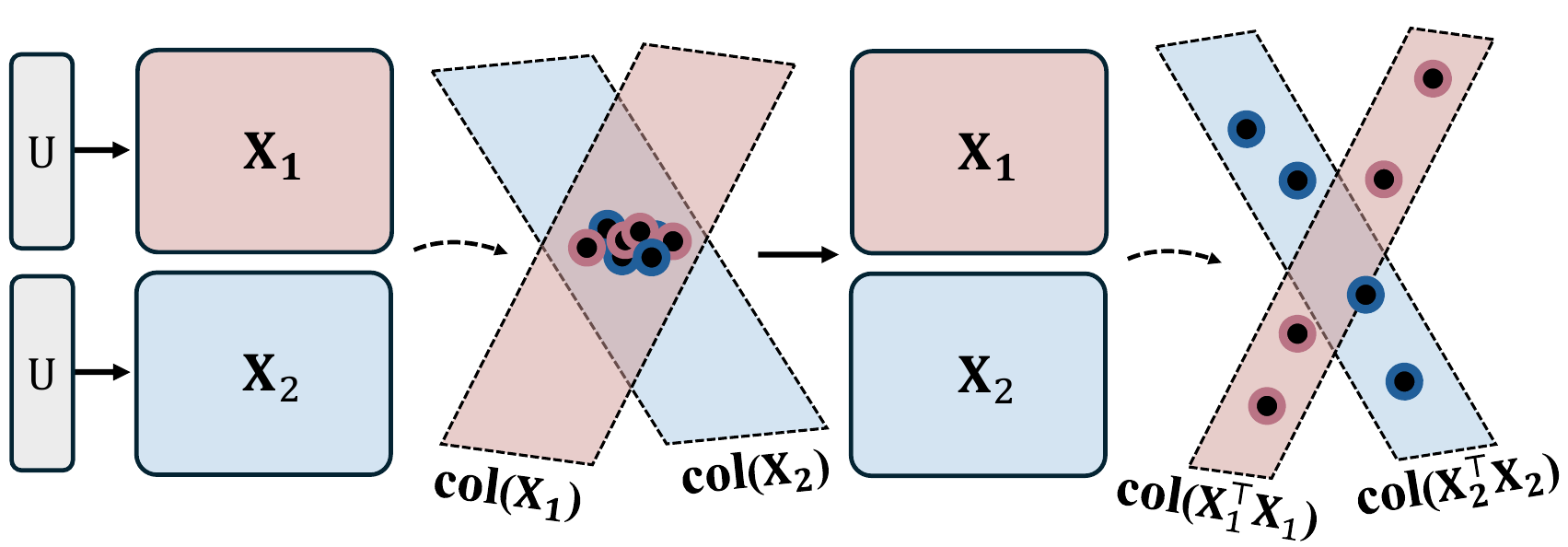}
  \caption{\textbf{Conceptual illustration of Theorem~\ref{thm:second_order}.}
  With the same probe matrix $\mathbf{U}$, two distinct weight matrices $\mathbf{X}_1$ and $\mathbf{X}_2$ may produce first-order probe responses that are indistinguishable under standard probing.
  Theorem~\ref{thm:second_order} shows that second-order, Gram-based representations can separate such cases.}
  \label{fig:thm1}
\end{figure}
\begin{theorem}[Expressiveness of Second-Order Probes]
\label{thm:second_order}
Let $\mathbf{U}\in\mathbb{R}^{n\times r}$ be a probe matrix with $\mathrm{rank}(\mathbf{U})=r<n$.
Define the first-order and second-order features:
\begin{equation}
\Phi_1(\mathbf{X}) := \mathbf{X}\mathbf{U}\in\mathbb{R}^{m\times r},
\quad
\Phi_2(\mathbf{X}) := (\mathbf{X}^\top \mathbf{X})\mathbf{U}\in\mathbb{R}^{n\times r}.
\end{equation}

Then there exist distinct matrices $\mathbf{X}_1\neq \mathbf{X}_2$ such that
\begin{equation}
\Phi_1(\mathbf{X}_1)=\Phi_1(\mathbf{X}_2)
\quad\text{but}\quad
\Phi_2(\mathbf{X}_1)\neq \Phi_2(\mathbf{X}_2).
\end{equation}

Consequently, when $r<n$, adding second-order branches can separate weight matrices that are indistinguishable to first-order probing alone (depicted in~\Fref{fig:thm1}).
\end{theorem}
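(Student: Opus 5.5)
We give an explicit construction that exploits the nullspace of $\mathbf{U}^\top$. Since $\mathrm{rank}(\mathbf{U})=r<n$, the orthogonal complement of the column space of $\mathbf{U}$ is nontrivial. So we can pick a unit vector $\mathbf{v}\in\mathbb{R}^n$ with $\mathbf{v}^\top\mathbf{U}=0$, i.e.\ $\mathbf{v}\perp\mathrm{col}(\mathbf{U})$. Fix any $\mathbf{a}\in\mathbb{R}^m$ with $\mathbf{a}\neq 0$ and set $\mathbf{Z}:=\mathbf{a}\mathbf{v}^\top$. Then $\mathbf{Z}\mathbf{U}=\mathbf{a}(\mathbf{v}^\top\mathbf{U})=0$, which is the blind-spot observation already made in the discussion of row--column asymmetry.

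\textbf{Step 1 (the pair).} Take $\mathbf{X}_1:=\mathbf{X}$ for a base matrix to be chosen, and $\mathbf{X}_2:=\mathbf{X}+\mathbf{Z}$. Then $\mathbf{X}_1\neq\mathbf{X}_2$ because $\mathbf{Z}\neq 0$. Moreover $\Phi_1(\mathbf{X}_2)=\mathbf{X}\mathbf{U}+\mathbf{Z}\mathbf{U}=\Phi_1(\mathbf{X}_1)$.

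\textbf{Step 2 (the Gram difference).} Expand
\begin{equation*}
\mathbf{X}_2^\top\mathbf{X}_2-\mathbf{X}_1^\top\mathbf{X}_1=\mathbf{X}^\top\mathbf{Z}+\mathbf{Z}^\top\mathbf{X}+\mathbf{Z}^\top\mathbf{Z}.
\end{equation*}
Multiplying on the right by $\mathbf{U}$ and using $\mathbf{Z}\mathbf{U}=0$, the first term drops out:
\begin{equation*}
\Phi_2(\mathbf{X}_2)-\Phi_2(\mathbf{X}_1)=\mathbf{Z}^\top\mathbf{X}\mathbf{U}=\mathbf{v}\,(\mathbf{a}^\top\mathbf{X}\mathbf{U}).
\end{equation*}
The term $\mathbf{Z}^\top\mathbf{Z}\mathbf{U}=\mathbf{v}\,\|\mathbf{a}\|^2(\mathbf{v}^\top\mathbf{U})$ also vanishes. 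This difference is a nonzero rank-one matrix exactly when the row vector $\mathbf{a}^\top\mathbf{X}\mathbf{U}\in\mathbb{R}^{1\times r}$ is nonzero.

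\textbf{Step 3 (choosing the base matrix).} We need a single $\mathbf{X}$ with $\mathbf{a}^\top\mathbf{X}\mathbf{U}\neq 0$. Since $r\ge 1$, $\mathbf{U}$ has some column $\mathbf{u}_1\neq 0$. Take $\mathbf{X}:=\mathbf{a}\,\mathbf{u}_1^\top$. Then
\begin{equation*}
\mathbf{a}^\top\mathbf{X}\mathbf{U}=\|\mathbf{a}\|^2\,\mathbf{u}_1^\top\mathbf{U},
\end{equation*}
whose first entry is $\|\mathbf{a}\|^2\|\mathbf{u}_1\|^2>0$. Hence $\Phi_2(\mathbf{X}_1)\neq\Phi_2(\mathbf{X}_2)$, which proves the statement. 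The consequence about adding second-order branches then follows directly. Note that the second-order branch in the theorem reuses the same $\mathbf{U}$, which is what the construction uses.

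The only step that needs care is Step 3. An arbitrary $\mathbf{X}$ does not work: for example, $\mathbf{X}=0$ gives $\mathbf{a}^\top\mathbf{X}\mathbf{U}=0$. So the base matrix must be chosen so that its first-order response $\mathbf{X}\mathbf{U}$ is not orthogonal to $\mathbf{a}$, and the rank-one choice above guarantees this. More generally, the argument shows separation whenever $\mathbf{X}\mathbf{U}\neq 0$, by taking $\mathbf{a}$ to be any nonzero column of $\mathbf{X}\mathbf{U}$. This shows that the indistinguishable pairs are generic rather than contrived.
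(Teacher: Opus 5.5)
Your proof is correct and, once you fix the base matrix in Step 3, it is exactly the paper's construction: $\mathbf{X}_1=\mathbf{a}\mathbf{u}_1^\top$ and $\mathbf{X}_2=\mathbf{a}(\mathbf{u}_1+\mathbf{v})^\top$ are the paper's rank-one pair $\mathbf{a}\mathbf{v}^\top,\ \mathbf{a}(\mathbf{v}+\mathbf{w})^\top$. The translation is that the paper's $\mathbf{v}$ becomes your $\mathbf{u}_1$ and the paper's null vector $\mathbf{w}$ is your $\mathbf{v}$, and your separating difference $\|\mathbf{a}\|^2\,\mathbf{v}\,\mathbf{u}_1^\top\mathbf{U}$ matches the paper's $\|\mathbf{a}\|_2^2\,\mathbf{w}\mathbf{v}^\top\mathbf{U}$. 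The additive framing, with difference $\mathbf{v}(\mathbf{a}^\top\mathbf{X}\mathbf{U})$ for an arbitrary base $\mathbf{X}$, is a small bonus: it shows that every $\mathbf{X}$ with $\mathbf{X}\mathbf{U}\neq\mathbf{0}$ has such a partner.
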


\textit{Proof sketch.} Since $\mathrm{rank}(\mathbf{U})=r<n$, there exists $\mathbf{w}\neq \mathbf{0}$ such that $\mathbf{w}^\top \mathbf{U}=\mathbf{0}^\top$. Construct $\mathbf{X}_1 = \mathbf{a}\mathbf{v}^\top$ and $\mathbf{X}_2 = \mathbf{a}(\mathbf{v}+\mathbf{w})^\top$ where $\mathbf{v}^\top\mathbf{U}\neq\mathbf{0}$. Then $\mathbf{X}_1\mathbf{U}=\mathbf{X}_2\mathbf{U}$, but $(\mathbf{X}_2^\top\mathbf{X}_2)\mathbf{U}-(\mathbf{X}_1^\top\mathbf{X}_1)\mathbf{U}=\|\mathbf{a}\|_2^2\mathbf{w}\mathbf{v}^\top\mathbf{U}\neq\mathbf{0}$. Full proof in Appendix~\ref{app:proofs}. \hfill$\square$

Empirical evidence for this theorem---embedding-overlap and neighborhood-recovery analyses on failure layers---is provided in Appendix~\ref{app:ovl_recovery}.

Symmetrically, the column-side probe $\mathbf{X}^{\top}\mathbf{V}$ is also non-redundant with $\mathbf{X}\mathbf{U}$:
\begin{theorem}[Transpose-Complement Non-Redundancy]
\label{thm:transpose}
Let $\mathbf{U}\in\mathbb{R}^{n\times r}$ and $\mathbf{V}\in\mathbb{R}^{m\times r}$ be probe matrices with $\mathrm{rank}(\mathbf{U})=r<n$. Then there exist distinct $\mathbf{X}_1,\mathbf{X}_2\in\mathbb{R}^{m\times n}$ such that
\begin{equation}
\mathbf{X}_1\mathbf{U}=\mathbf{X}_2\mathbf{U}\quad\text{but}\quad \mathbf{X}_1^{\top}\mathbf{V}\neq \mathbf{X}_2^{\top}\mathbf{V}.
\end{equation}
\end{theorem}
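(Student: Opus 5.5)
The plan mirrors the rank-one perturbation used for Theorem~\ref{thm:second_order}. We fix a base matrix $\mathbf{X}_1$ and set $\mathbf{X}_2=\mathbf{X}_1+\mathbf{D}$, where the perturbation $\mathbf{D}\neq\mathbf{0}$ lies in the blind spot of row probing, $\mathbf{D}\mathbf{U}=\mathbf{0}$, but is visible to column probing, $\mathbf{D}^\top\mathbf{V}\neq\mathbf{0}$. By linearity, $\mathbf{X}_2\mathbf{U}-\mathbf{X}_1\mathbf{U}=\mathbf{D}\mathbf{U}$ and $\mathbf{X}_2^\top\mathbf{V}-\mathbf{X}_1^\top\mathbf{V}=\mathbf{D}^\top\mathbf{V}$. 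So the whole statement reduces to finding such a $\mathbf{D}$, and $\mathbf{X}_1$ can be taken arbitrary (e.g.\ $\mathbf{X}_1=\mathbf{0}$).

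\textbf{Step 1 (null direction).} Since $\mathbf{U}\in\mathbb{R}^{n\times r}$ has rank $r<n$, its column space is a proper subspace of $\mathbb{R}^n$. Hence there is $\mathbf{w}\neq\mathbf{0}$ with $\mathbf{w}^\top\mathbf{U}=\mathbf{0}^\top$, exactly as in the proof sketch of Theorem~\ref{thm:second_order}.

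\textbf{Step 2 (rank-one perturbation).} Set $\mathbf{D}=\mathbf{a}\mathbf{w}^\top$ for some $\mathbf{a}\in\mathbb{R}^m$. Then $\mathbf{D}\mathbf{U}=\mathbf{a}(\mathbf{w}^\top\mathbf{U})=\mathbf{0}$, so first-order row probing cannot distinguish $\mathbf{X}_1$ from $\mathbf{X}_2$. Meanwhile $\mathbf{D}^\top\mathbf{V}=\mathbf{w}(\mathbf{a}^\top\mathbf{V})$. This is an outer product of the nonzero vector $\mathbf{w}$ with the row vector $\mathbf{a}^\top\mathbf{V}$, so it is nonzero as soon as $\mathbf{a}^\top\mathbf{V}\neq\mathbf{0}^\top$.

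\textbf{Step 3 (choice of $\mathbf{a}$).} Pick any index $k$ with $\mathbf{V}_{:,k}\neq\mathbf{0}$ and set $\mathbf{a}=\mathbf{V}_{:,k}$. The $k$-th entry of $\mathbf{a}^\top\mathbf{V}$ is then $\|\mathbf{V}_{:,k}\|_2^2>0$. With this choice $\mathbf{D}\neq\mathbf{0}$ (so $\mathbf{X}_1\neq\mathbf{X}_2$), $\mathbf{D}\mathbf{U}=\mathbf{0}$, and $\mathbf{D}^\top\mathbf{V}\neq\mathbf{0}$, which proves the claim.

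The main obstacle is the degenerate case: the argument needs $\mathbf{V}\neq\mathbf{0}$. If $\mathbf{V}=\mathbf{0}$, then $\mathbf{X}^\top\mathbf{V}=\mathbf{0}$ for every $\mathbf{X}$ and the conclusion is false. The statement imposes a rank condition only on $\mathbf{U}$, so I would make explicit that $\mathbf{V}$ is a nonzero probe matrix. This is automatic for any nontrivially initialized or trained probe, and it holds, for instance, if $\mathbf{V}$ also has rank $r\ge 1$. No other step needs more than linearity and the existence of a nonzero left null vector of $\mathbf{U}$.
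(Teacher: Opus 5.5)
Your proposal is correct and is essentially the paper's proof: take $\mathbf{w}\neq\mathbf{0}$ with $\mathbf{w}^\top\mathbf{U}=\mathbf{0}^\top$, set $\mathbf{X}_2=\mathbf{X}_1+\mathbf{a}\mathbf{w}^\top$ with $\mathbf{V}^\top\mathbf{a}\neq\mathbf{0}$, and read off $\mathbf{X}_2^\top\mathbf{V}-\mathbf{X}_1^\top\mathbf{V}=\mathbf{w}\,\mathbf{a}^\top\mathbf{V}\neq\mathbf{0}$. Your caveat about $\mathbf{V}=\mathbf{0}$ is well taken: the paper's proof also silently relies on $\mathbf{V}\neq\mathbf{0}$ (``possible since $\mathbf{V}\neq\mathbf{0}$''), which the statement as written does not guarantee, and your explicit choice $\mathbf{a}=\mathbf{V}_{:,k}$ makes that step concrete.
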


\textit{Proof sketch.} Pick $\mathbf{w}\in\mathbb{R}^n$ with $\mathbf{w}^\top\mathbf{U}=\mathbf{0}^\top$ (exists by $\mathrm{rank}(\mathbf{U})<n$) and $\mathbf{a}\in\mathbb{R}^m$ with $\mathbf{V}^\top\mathbf{a}\neq\mathbf{0}$. Set $\mathbf{X}_2=\mathbf{X}_1+\mathbf{a}\mathbf{w}^\top$. Then $\mathbf{X}_2\mathbf{U}=\mathbf{X}_1\mathbf{U}+\mathbf{a}(\mathbf{w}^\top\mathbf{U})=\mathbf{X}_1\mathbf{U}$, while $\mathbf{X}_2^\top\mathbf{V}-\mathbf{X}_1^\top\mathbf{V}=\mathbf{w}(\mathbf{a}^\top\mathbf{V})\neq\mathbf{0}$. Full proof in Appendix~\ref{app:proofs}. \hfill$\square$

While second-order branches provably resolve such ambiguities, naively processing multi-order responses introduces a scale imbalance. The following theorem quantifies this discrepancy, motivating our per-sample standardization.
\begin{theorem}[Scale Imbalance in Multi-Order Branches]
\label{thm:scale}
Let $\mathbf{X} \in \mathbb{R}^{m \times n}$ have i.i.d.\ entries $X_{ij} \sim \mathcal{N}(0,\sigma^2)$. For first-order response $\mathbf{S}^{(1)} = \mathbf{X}\mathbf{U}$ and second-order response $\mathbf{S}^{(2)} = (\mathbf{X}\mathbf{X}^\top)\mathbf{W}$ with unit-norm probe columns, the expected scale ratio is:
\begin{equation}
\frac{\mathbb{E}[\|\mathbf{S}^{(2)}\|_F^2]}{\mathbb{E}[\|\mathbf{S}^{(1)}\|_F^2]}
= \frac{n(n+m+1)}{m}\sigma^2 = O(n\sigma^2),
\end{equation}
where $\|\cdot\|_F$ denotes the Frobenius norm.
In particular, when $n\sigma^2 \gg 1$, unstandardized concatenation $[\mathbf{S}^{(1)}, \mathbf{S}^{(2)}]$ is dominated by the second-order branch.
\end{theorem}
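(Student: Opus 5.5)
The plan is to compute numerator and denominator column by column, using the rotational invariance of the isotropic Gaussian matrix $\mathbf{X}$ to reduce each probe column to a standard basis vector. Throughout, $\mathbf{U}$ and $\mathbf{W}$ are treated as fixed (deterministic, independent of $\mathbf{X}$) with unit-norm columns $\mathbf{u}_k\in\mathbb{R}^n$ and $\mathbf{w}_k\in\mathbb{R}^m$. Since $\|\mathbf{S}\|_F^2=\sum_k\|\mathbf{S}_{:,k}\|_2^2$, it suffices to compute $\mathbb{E}\|\mathbf{X}\mathbf{u}\|_2^2$ and $\mathbb{E}\|\mathbf{X}\mathbf{X}^\top\mathbf{w}\|_2^2$ for a single unit vector. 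Both expectations will turn out not to depend on the direction, so the factor $r$ cancels in the ratio.

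\textbf{Denominator.} Each entry $(\mathbf{X}\mathbf{u})_i=\sum_j X_{ij}u_j$ is $\mathcal{N}(0,\sigma^2\|\mathbf{u}\|^2)=\mathcal{N}(0,\sigma^2)$. Hence $\mathbb{E}\|\mathbf{X}\mathbf{u}\|^2=m\sigma^2$ and $\mathbb{E}\|\mathbf{S}^{(1)}\|_F^2=rm\sigma^2$.

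\textbf{Numerator.} The law of $\mathbf{X}$ is invariant under $\mathbf{X}\mapsto\mathbf{Q}\mathbf{X}$ for orthogonal $\mathbf{Q}\in\mathbb{R}^{m\times m}$. Moreover,
\[
\|\mathbf{Q}\mathbf{X}\mathbf{X}^\top\mathbf{Q}^\top\mathbf{Q}\mathbf{w}\|=\|\mathbf{X}\mathbf{X}^\top\mathbf{w}\|.
\]
Choosing $\mathbf{Q}$ with $\mathbf{Q}\mathbf{w}=\mathbf{e}_1$ therefore lets us assume $\mathbf{w}=\mathbf{e}_1$. Writing $\mathbf{x}_j^\top$ for the rows of $\mathbf{X}$, we get
\[
\|\mathbf{X}\mathbf{X}^\top\mathbf{e}_1\|^2=\sum_{j=1}^m\langle\mathbf{x}_j,\mathbf{x}_1\rangle^2 .
\]
The rows are independent $\mathcal{N}(0,\sigma^2\mathbf{I}_n)$ vectors, and the sum splits into two kinds of terms.
\begin{itemize}
\item The diagonal term is $\mathbb{E}\|\mathbf{x}_1\|^4$. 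Since $\|\mathbf{x}_1\|^2/\sigma^2\sim\chi^2_n$, this equals $\sigma^4(n^2+2n)$.
\item Each of the $m-1$ off-diagonal terms satisfies $\mathbb{E}\langle\mathbf{x}_j,\mathbf{x}_1\rangle^2=\mathbb{E}[\mathbf{x}_1^\top\mathbb{E}(\mathbf{x}_j\mathbf{x}_j^\top)\mathbf{x}_1]=\sigma^2\,\mathbb{E}\|\mathbf{x}_1\|^2=n\sigma^4$, by conditioning on $\mathbf{x}_1$.
\end{itemize}
Summing gives
\[
\sigma^4\bigl(n^2+2n+(m-1)n\bigr)=\sigma^4 n(n+m+1),
\]
which is the Wishart identity $\mathbb{E}[(\mathbf{X}\mathbf{X}^\top)^2]=\sigma^4n(n+m+1)\mathbf{I}_m$. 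Hence $\mathbb{E}\|\mathbf{S}^{(2)}\|_F^2=r\sigma^4n(n+m+1)$.

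\textbf{Ratio.} Dividing the two expectations gives $n(n+m+1)\sigma^2/m$, as claimed.

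For the $O(n\sigma^2)$ statement, I would make the implicit regime explicit: the layer aspect ratio $n/m$ is bounded, so that $(n+m+1)/m=O(1)$. The dominance remark then follows directly. The per-column expected squared norms of the two branches are $\sigma^4n(n+m+1)$ and $m\sigma^2$, whose ratio is at least of order $n\sigma^2$. So when $n\sigma^2\gg1$, the second-order block carries almost all of the energy of the concatenation.

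The only step needing care is the fourth-moment computation. The main risk is mishandling the diagonal term $j=1$, where the two factors of $\langle\mathbf{x}_j,\mathbf{x}_1\rangle$ are not independent and $\mathbb{E}\|\mathbf{x}_1\|^4=\sigma^4(n^2+2n)$ must be used rather than $(n\sigma^2)^2$. The rotation-to-$\mathbf{e}_1$ reduction keeps this bookkeeping minimal. If needed, I would cross-check the result by a direct Isserlis expansion of $\mathbb{E}\sum_{i,j,k,l}X_{ik}X_{jk}X_{jl}X_{il}$.
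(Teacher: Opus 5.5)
Your proof is correct, but it reaches $n(n+m+1)$ through a different decomposition than the paper's.

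The paper writes $\mathbf{G}=\mathbf{X}\mathbf{X}^\top=\sum_{\ell=1}^{n}\mathbf{x}_\ell\mathbf{x}_\ell^\top$ over the $n$ \emph{columns} $\mathbf{x}_\ell\sim\mathcal{N}(0,\sigma^2\mathbf{I}_m)$. It then computes the matrix moment $\mathbb{E}[\mathbf{G}^2]$ from two pieces: $n$ self-terms $\mathbb{E}[\|\mathbf{x}_\ell\|^2\mathbf{x}_\ell\mathbf{x}_\ell^\top]=(m+2)\sigma^4\mathbf{I}$, and $n(n-1)$ independent cross terms $\sigma^4\mathbf{I}$. Finally it reads off $\mathbb{E}\|\mathbf{S}^{(2)}\|_F^2=\mathrm{tr}(\mathbf{W}^\top\mathbb{E}[\mathbf{G}^2]\mathbf{W})$ for all probe columns at once.

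You instead rotate each probe column to $\mathbf{e}_1$ and sum over the $m$ \emph{rows} $\mathbf{x}_j\in\mathbb{R}^n$. This gives one dependent term $\mathbb{E}\|\mathbf{x}_1\|^4=\sigma^4(n^2+2n)$ plus $m-1$ cross terms $n\sigma^4$ each.

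What each route buys:
\begin{itemize}
\item The paper's route yields the identity $\mathbb{E}[\mathbf{G}^2]=n(n+m+1)\sigma^4\mathbf{I}$, so direction-independence needs no reduction step. If the self-term is computed entrywise, the argument also carries over to any zero-mean i.i.d.\ entries, with $3\sigma^4$ replaced by the fourth moment.
\item Your rotation step ties the argument to the rotation-invariant Gaussian case. In exchange, everything becomes a scalar computation, and the one non-independent term is isolated and handled by a $\chi^2_n$ moment.
\end{itemize}

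Your remark on the asymptotics sharpens the statement as written. The claim $\tfrac{n(n+m+1)}{m}\sigma^2=O(n\sigma^2)$ needs the aspect ratio $n/m$ to be bounded, which the paper leaves implicit. The dominance conclusion, by contrast, only uses the unconditional lower bound $\tfrac{n(n+m+1)}{m}\sigma^2>n\sigma^2$, as you observe.
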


\textit{Proof sketch.} Direct computation shows $\mathbb{E}[\|\mathbf{S}^{(1)}\|_F^2] = mr\sigma^2$ and $\mathbb{E}[\|\mathbf{S}^{(2)}\|_F^2] = rn(n+m+1)\sigma^4$. The ratio follows immediately. Full proof in Appendix~\ref{app:proofs}. \hfill$\square$

\begin{corollary}[Per-sample Standardization Equalizes Branch Contributions]
\label{cor:std}
Per-sample standardization $\tilde{\mathbf{S}} = (\mathbf{S} - \mu_\mathbf{S})/\sigma_\mathbf{S}$ yields $\|\tilde{\mathbf{S}}\|_F^2$ equal to the number of elements of $\mathbf{S}$, independent of the probing order.
Consequently, applying this standardization independently to each branch (e.g., $\tilde{\mathbf{S}}^{(1)}$ and $\tilde{\mathbf{S}}^{(2)}$) mitigates the scale imbalance in Theorem~\ref{thm:scale} and prevents higher-order branches from dominating the concatenated representation.
\end{corollary}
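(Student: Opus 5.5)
The corollary reduces to a one-line identity about the empirical variance, followed by a short remark on how this bears on Theorem~\ref{thm:scale}. Let $N$ be the number of entries of $\mathbf{S}$ (so $N=mr$ or $nr$ depending on the branch). Let $\mu_\mathbf{S}=\frac1N\sum_{a}S_a$ and $\sigma_\mathbf{S}^2=\frac1N\sum_a (S_a-\mu_\mathbf{S})^2$, both computed over all elements. This is the population (not Bessel-corrected) standard deviation; with the $\frac{1}{N-1}$ convention the identity becomes $N-1$, and I will note this. I will also set $\epsilon=0$, as in the corollary's statement, and assume $\sigma_\mathbf{S}>0$, i.e.\ $\mathbf{S}$ is not constant.

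The first step is a direct computation:
\begin{equation}
\|\tilde{\mathbf{S}}\|_F^2=\sum_a \frac{(S_a-\mu_\mathbf{S})^2}{\sigma_\mathbf{S}^2}=\frac{N\sigma_\mathbf{S}^2}{\sigma_\mathbf{S}^2}=N .
\end{equation}
The result depends only on $N$. It does not involve $\sigma$, $m$ or $n$ except through the shape of $\mathbf{S}$. In particular it does not depend on whether $\mathbf{S}$ came from $\mathbf{X}\mathbf{U}$ or $(\mathbf{X}\mathbf{X}^\top)\mathbf{W}$. The same computation also shows that $\tilde{\mathbf{S}}$ has zero mean, so each standardized branch has per-entry second moment exactly $1$.

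The second step compares branches. For the first-order branch $\mathbf{S}^{(1)}$ and the second-order branch $\mathbf{S}^{(2)}$ of Theorem~\ref{thm:scale}, both of shape $m\times r$, the standardized norms are both $mr$. The ratio is therefore $1$, replacing the $\frac{n(n+m+1)}{m}\sigma^2$ ratio established in Theorem~\ref{thm:scale}. This holds deterministically, for every sample, and not just in expectation. For the column-side branches of shape $n\times r$, the ratio is $n/m$ regardless of probing order, so it is dimension-driven rather than $\sigma$-driven. The per-entry scale is $1$ in every branch.

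The only point requiring care is making ``mitigates dominance'' precise, since the formal content is just the norm identity. I would state that in the concatenation $[\tilde{\mathbf{S}}^{(1)},\tilde{\mathbf{S}}^{(2)}]$ each block contributes a fraction of the squared norm equal to its share of entries. That share is independent of $\sigma$ and of the order. The $O(n\sigma^2)$ imbalance of Theorem~\ref{thm:scale} is therefore removed exactly. Finally I would check the degenerate case: with $\epsilon>0$ the identity becomes $N\sigma_\mathbf{S}^2/(\sigma_\mathbf{S}+\epsilon)^2\le N$, which approaches $N$ when $\sigma_\mathbf{S}\gg\epsilon$.
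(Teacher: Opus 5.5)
Your proof is correct and is the argument the paper relies on: the paper gives no separate proof of this corollary and treats $\|\tilde{\mathbf{S}}\|_F^2=N$ as immediate from the definition of the empirical mean and (population) standard deviation, which is exactly your one-line computation. Your added remarks are accurate refinements of that argument: the result becomes $N-1$ under the $1/(N-1)$ convention, it is at most $N$ when $\epsilon>0$, the row-side first- and second-order branches give a ratio of $1$ because both are $m\times r$, and the column-side branches differ only by the dimension-driven factor $n/m$.
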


Theorem~\ref{thm:scale} motivates standardization: without it, naive concatenation $[\mathbf{S}^{(1)}, \mathbf{S}^{(2)}]$ can be skewed toward second-order responses; empirically, Table~\ref{tab:std-ablation} shows that standardization consistently improves accuracy.
\section{Experiments}
\label{sec:experiments}

\textbf{Datasets.} We evaluate \ours{} primarily on the Model Jungle dataset~\footnote{https://huggingface.co/ProbeX}~\cite{ProbeX}, which contains fine-tuned weights from four foundation model architectures: ResNet101~\cite{ResNet}, SupViT~\cite{SupViT}, MAE~\cite{MAE}, and DINO~\cite{caron2021emerging}. Each model is fine-tuned on 50 randomly selected classes from CIFAR-100~\cite{krizhevsky2009learning}, and the task is to predict the training categories from model weights alone. To assess scalability to generative foundation models, we additionally use the Stable Diffusion~\cite{rombach2022high} LoRA~\cite{hu2022lora,ruiz2023dreambooth} subset of Model Jungle in two settings---$SD_{200}$ (25 models per class over 200 ImageNet~\cite{deng2009imagenet} classes) and $SD_{1k}$ (5 models per class over the full 1{,}000 classes)---evaluated in Sections~\ref{subsec:sd-lora} and~\ref{subsec:retrieval-occ}.

\textbf{Baselines.} We compare \ours{} against state-of-the-art weight-space learning methods: (1) StatNN~\cite{unterthiner2020predicting}, which represents models by concatenating seven statistics (mean, variance, and five quantiles) from model weights; (2) ProbeGen~\cite{ProbeGen}, which introduces a shared probe generator to increase the expressivity of probe representations; (3) ProbeX~\cite{ProbeX}, which extends to single-layer weight matrices, enabling scalability to larger models. 
Although graph-based and other equivariant weight-space approaches~\cite{zhang2023neural,navon2301equivariant,kofinas2024graph,lim2312graph} are possible baselines, they are excluded due to their prohibitive computational overhead on large weights and their tendency to diverge in single-layer probing~\cite{ProbeX}. See Appendix~\ref{app:small-scale_benchmark} for direct comparison on small-scale benchmarks where they remain feasible.

\textbf{Implementation details.} We use $r=128$ probes per branch, projection dimension $d=128$, and output representation dimension $512$. Models are trained with the Adam optimizer~\cite{kingma2015adam} (learning rate $3 \times 10^{-4}$) for 500 epochs with batch size 128. All experiments use a single NVIDIA RTX 3090 GPU.
We use the best-performing layers $\ell=67,59,64,47$ for ResNet, SupViT, MAE, and DINO, respectively, while using $\ell=46$ for all Stable Diffusion LoRA experiments.
Full architecture details, hyperparameters, and the efficient Gram-matrix derivation are in Appendix~\ref{app:impl}; the algorithm pseudocode is in Appendix~\ref{app:algorithm}.

\subsection{Main Results on Model Jungle}
\label{subsec:main-results}
\begin{table}[!htbp]
\caption{\textbf{Model Jungle dataset classification accuracy.} We report accuracy (\%) averaged over 5 seeds, evaluated at the best-performing layer for each model family. Best results are in \textbf{bold}.}
\label{tab:model-jungle}
\begin{center}
\begin{tabular}{lcccc}
\toprule
Method & ResNet & SupViT & MAE & DINO \\
\midrule
StatNN & 55.20 & 55.80 & 54.83 & 55.69 \\
ProbeGen & 78.27 & 78.48 & 70.68 & 61.26 \\
ProbeX & 81.61 & 88.08 & 77.11 & 72.54 \\
ProbeX ($\times$4) & 87.16 & 90.33 & 77.26 & 73.25 \\
\rowcolor{oursrow}
\ours{} & \textbf{92.24} & \textbf{92.33} & \textbf{81.62} & \textbf{78.29} \\
\midrule
\textit{Improvement} & \textit{+5.08} & \textit{+2.00} & \textit{+4.36} & \textit{+5.04} \\
\bottomrule
\end{tabular}
\end{center}
\end{table}

As shown in~\Tref{tab:model-jungle}, \ours{} consistently outperforms all baselines across all four architectures. Notably, we achieve substantial improvements over the previous state-of-the-art ProbeX ($\times$4): +5.08\% on ResNet, +2.00\% on SupViT, +4.36\% on MAE, and +5.04\% on DINO at each model family's best-performing layer. The improvements are particularly pronounced on architectures where ProbeX struggled (ResNet, DINO), demonstrating that our multi-perspective probing effectively captures structural information that single-direction probe responses ($\mathbf{X} \mathbf{U}$) miss.\footnote{\ours{} \emph{uniformly dominates} ProbeX($\times 4$) under sufficient row-side rank; see Appendix~\ref{app:uniform-domination} for the formal definition and theory.} Note that encoding model weights into hand-crafted statistics (StatNN) is insufficient to express large-scale models.
Even a more advanced method like ProbeGen, proposed for probing across layers on smaller-scale settings, still underperforms on the large-scale Model Jungle benchmark, highlighting the challenge of performing classification from a single layer in large models.

\subsection{Robustness to Layer Selection}
\label{subsec:layer-robustness}
Neural networks exhibit heterogeneous representations across layer depths.
Consequently, performance can vary substantially across layers, making the search for an optimal layer computationally costly. 
In~\Fref{fig:layer-robustness}, we compare the layer-wise performance of ProbeX and \ours{}, shown as dashed and solid curves, respectively. 
We focus on intermediate layers, indexing layers by $\ell$, and report results for $\ell \in \{20,\ldots,73\}$, as very early or deep layers contain limited task-relevant information regardless of the fine-tuning objective. 
As indicated by the shaded bands, \ours{} consistently improves performance for ResNet and SupViT while simultaneously reducing layer-wise variability. 
For MAE and DINO, although the performance fluctuations shrink only slightly, \ours{} still consistently improves performance across the majority of layers (See~\Sref{sec:discussion} for further discussion).
We additionally mark the layer achieving the maximum per-layer gain
$\mathrm{Acc}_{\ours{}}(\ell)-\mathrm{Acc}_{\mathrm{ProbeX}}(\ell)$ for each model family.
These markers serve as a concise summary of where the improvement peaks; importantly, \ours{} outperforms ProbeX over a broad span of intermediate layers rather than relying on a single optimal layer, reflecting improved robustness to layer selection.

\begin{figure}[!htbp]
    \centering
    \includegraphics[width=\columnwidth]{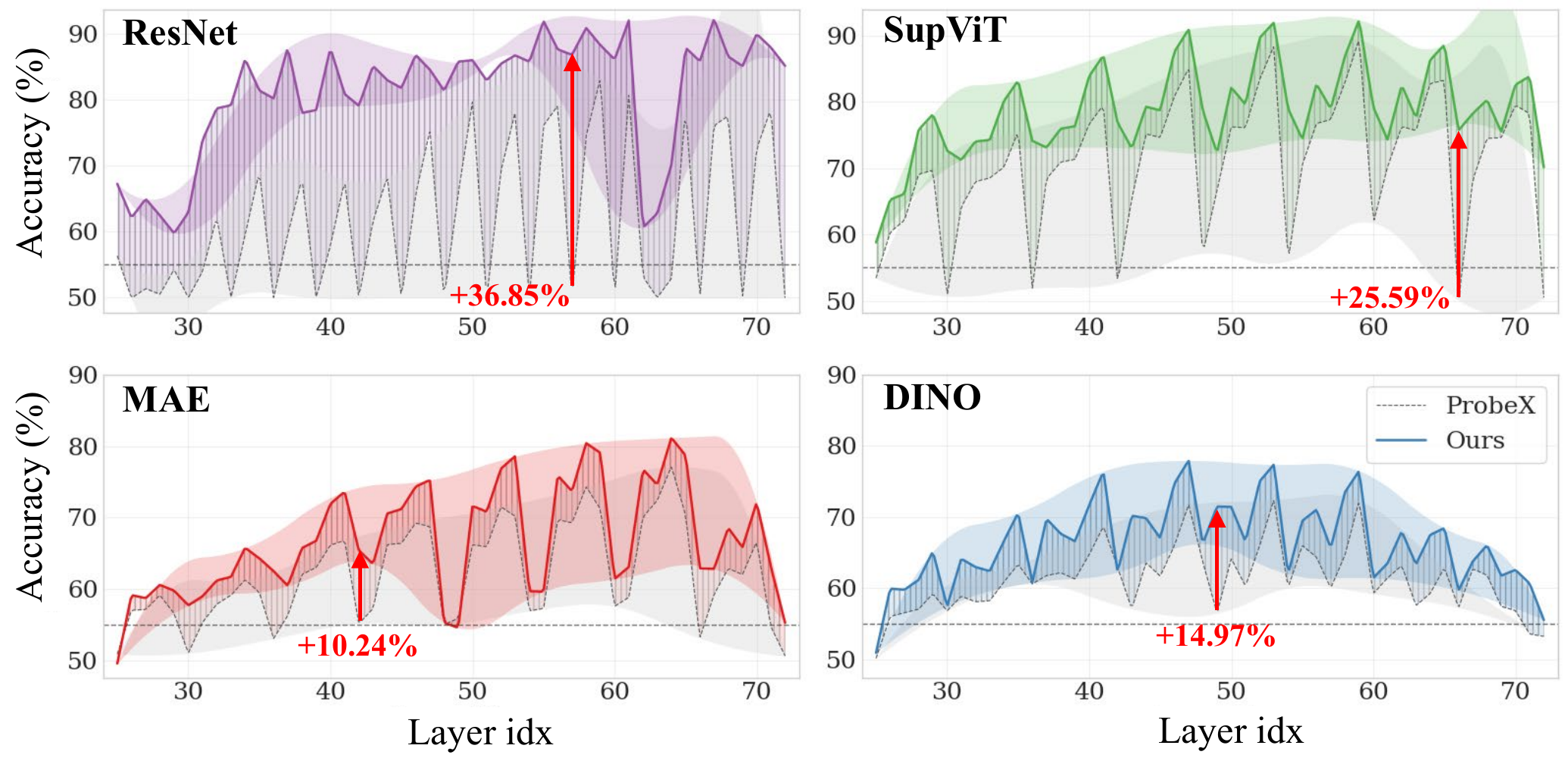}
    \caption{\textbf{Layer-wise performance comparison.} \ours{} (solid lines) vs. ProbeX (dashed lines) across all layers. Shaded bands indicate the performance volatility. \ours{} maintains higher accuracy across most layers and shows less sensitivity to layer selection.}
    \label{fig:layer-robustness}
\end{figure}

\subsection{Ablation Study: Branch Contributions}
\label{subsec:ablation}
We analyze the contribution of each probing branch in~\Tref{tab:ablation}. Each row shows the performance when using only the specified branch(es).

\begin{table}[!htbp]
\centering
\caption{\textbf{Ablation study on probing branches.} Each row shows accuracy (\%) when using only the specified branch(es). $\mathbf{XU}$: row probing, $\mathbf{X}^{\top}\mathbf{V}$: column probing, $\mathbf{XX}^{\top}\mathbf{W}$: row kernel probing, $\mathbf{X}^{\top}\mathbf{XZ}$: column kernel probing.
Best results are in \textbf{bold} and the second-best are \underline{underlined}.}
\label{tab:ablation}
\resizebox{\columnwidth}{!}{%
\begin{tabular}{lcccc}
\toprule
Branch Configuration & ResNet & SupViT & MAE & DINO \\
\midrule
$\mathbf{XU}$ only & 90.42 & 89.09 & 78.35 & 74.17 \\
$\mathbf{X}^{\top}\mathbf{V}$ only & 88.94 & 87.55 & 77.01 & 72.04 \\
$\mathbf{XX}^{\top}\mathbf{W}$ only & 89.26 & 89.52 & 78.28 & 72.68 \\
$\mathbf{X}^{\top}\mathbf{XZ}$ only & 87.00 & 90.56 & 78.22 & 72.35 \\
\midrule
First-order ($\mathbf{XU}$, $\mathbf{X}^{\top}\mathbf{V}$) & \uline{91.24} & 89.88 & 79.74 & \uline{76.01} \\
Second-order ($\mathbf{XX}^{\top}\mathbf{W}$, $\mathbf{X}^{\top}\mathbf{XZ}$) & 90.22 & \uline{92.04} & \uline{80.57} & 75.82 \\
\rowcolor{oursrow}
\ours{} (All four branches) & \textbf{92.24} & \textbf{92.33} & \textbf{81.62} & \textbf{78.29} \\
\bottomrule
\end{tabular}%
}
\end{table}

The results reveal several insights:

\textbf{Each branch provides complementary information.} Combining all four branches yields the best performance across all architectures. No single branch alone matches the full model, confirming that first-order and second-order perspectives capture different structural aspects.

\textbf{Row vs. column asymmetry.}
Row-derived branches tend to be stronger on ResNet and DINO at both orders (e.g., $\mathbf{XU} > \mathbf{X}^{\top}\mathbf{V}$ and $\mathbf{XX}^{\top}\mathbf{W} > \mathbf{X}^{\top}\mathbf{XZ}$).
MAE shows the same tendency at first order, while its second-order row/column kernels are nearly tied (78.28\% vs.\ 78.22\%).
SupViT is order-dependent: $\mathbf{XU}$ gives a higher score (89.09\% vs.\ 87.55\%), but $\mathbf{X}^{\top}\mathbf{XZ}$ gives a higher score (90.56\% vs.\ 89.52\%).
Overall, the more informative axis (row vs.\ column) depends on both the backbone and the interaction order.

\textbf{Complementarity across orders.}
The second-order combination is not consistently better than the first-order one (it is slightly lower on DINO), yet aggregating all four branches achieves the best accuracy across all architectures (Table~\ref{tab:ablation}).
This suggests that second-order correlation structure provides information that complements direct first-order responses, and is most effective when combined rather than used in isolation.

\textbf{Probe count vs.\ branch design.}
Consistent with this, increasing the number of ProbeX probes alone does not improve performance (Table~\ref{tab:model-jungle}), highlighting that the gains primarily come from the proposed multi-branch design.

\subsection{Ablation Study: Standardization}
\label{subsec:std_ablation}

Theorem~\ref{thm:scale} suggests that multi-order branches can exhibit scale imbalance, so that naive concatenation may overweight second-order responses. We perform an ablation by comparing \ours{} with and without per-sample standardization of branch features prior to aggregation.

\begin{table}[!htbp]
\centering
\caption{\textbf{Effect of Standardization.} Accuracy (\%) of \ours{} with and without per-sample standardization across model families.
$\Delta$ denotes the mean per-layer gain (w/ Std $-$ w/o Std), computed from unrounded values, and \% Improved denotes the fraction of layers with $\Delta>0$.
Standardization consistently improves performance, with 89.2\% of layers showing improvement (one-sided sign test, $p<0.001$).}
\label{tab:std-ablation}
\resizebox{\columnwidth}{!}{%
\begin{tabular}{lcccc}
\toprule
Model & w/o Std & w/ Std (\ours{}) & $\Delta$ & \% Improved \\
\midrule
DINO & 58.9 & 63.0 & \textbf{+4.2} & 97.3 \\
MAE & 62.1 & 62.9 & +0.8 & 76.7 \\
ResNet & 69.7 & 73.8 & \textbf{+4.1} & 96.2 \\
SupViT & 71.4 & 73.0 & +1.6 & 83.8 \\
\midrule
\textbf{Average} & 65.9 & 68.8 & \textbf{+2.8} & \textbf{89.2} \\
\bottomrule
\end{tabular}%
}
\end{table}

\begin{table}[!htbp]
\centering
\caption{\textbf{Standardization effect by layer depth.} Gains vary with depth; middle layers show the largest average improvements in both ViT-based models (blocks 4--7) and ResNet (stage 2). $\Delta$ is computed from unrounded per-layer means.}
\label{tab:std-depth}
\resizebox{\columnwidth}{!}{%
\begin{tabular}{llccc}
\toprule
\textbf{Model} & \textbf{Layer Depth} & \textbf{w/o Std} & \textbf{w/ Std} & \textbf{$\Delta$} \\
\midrule
\multirow{4}{*}{ViT (All)}
 & Early (block 0--3)   & 60.1 & 62.2 & +2.2 \\
 & Middle (block 4--7)  & 70.6 & 73.4 & \textbf{+2.9} \\
 & Late (block 8--11)   & 62.8 & 64.4 & +1.6 \\
\cmidrule{2-5}
 & \textit{Average}     & \textit{64.5} & \textit{66.7} & \textit{+2.2} \\
\midrule
\multirow{4}{*}{ResNet}
 & Early (stage 0--1)   & 54.0 & 55.9 & +1.8 \\
 & Middle (stage 2)     & 73.6 & 78.6 & \textbf{+5.1} \\
 & Late (stage 3)       & 80.7 & 83.6 & +2.9 \\
\cmidrule{2-5}
 & \textit{Average}     & \textit{69.9} & \textit{74.0} & \textit{+4.1} \\
\bottomrule
\multicolumn{5}{l}{\small \textit{Note:} ViT (All) includes SupViT, MAE, and DINO.}
\end{tabular}
}
\vspace{-3mm}
\end{table}

In Table~\ref{tab:std-ablation}, standardization yields a mean gain of +2.8\% across all models (one-sample $t$-test on per-layer $\Delta$, $p<0.001$), and improves 89.2\% of layers (one-sided sign test, $p<0.001$). Gains are largest for DINO (+4.2\%) and ResNet (+4.1\%), while MAE shows smaller improvements (+0.8\%). Gains also vary with depth (Table~\ref{tab:std-depth}; Kruskal--Wallis on $\Delta$: ViT-based $p<0.01$, ResNet $p<0.001$), with the largest average improvements in middle layers.
Overall, these results are consistent with Corollary~\ref{cor:std}, suggesting that standardization helps balance multi-branch contributions.
Full test details and grouping conventions are provided in Appendix~\ref{app:std_stats}, and a comparison against alternative fusion strategies (per-branch $L_2$ normalization versus per-sample standardization, and simple concatenation versus learned branch weighting) is reported in Table~\ref{tab:fusion_sensitivity} in Appendix~\ref{app:fusion_sensitivity}. Per-sample standardization combined with simple concatenation remains the most robust choice across architectures.

\subsection{Scaling to Generative Foundation Models}
\label{subsec:sd-lora}
Unlike fully fine-tuned models, LoRA checkpoints expose only a small fraction of the foundation model's weight updates, making property prediction substantially more challenging. The two methods are comparable on $SD_{200}$, but on the harder $SD_{1k}$, ProbeX collapses to $35.75\%$ in-distribution accuracy while \ours{} retains $97.88\%$ (Table~\ref{tab:sd-lora-main}); the gap holds across layers (Figure~\ref{fig:layerwise_sd}). On the failure layers where ProbeX drops below 26\%, \ours{} maintains 62--90\% accuracy (Appendix~\ref{app:sd_failure}).

\begin{table}[!htbp]
\centering
\caption{\textbf{SD LoRA classification accuracy.} We report mean accuracy (\%) with standard deviation (subscript) over 5 seeds on Stable Diffusion LoRA checkpoints at layer 46, evaluated under In-Distribution (training classes) and Zero-shot (held-out classes) protocols. Best results are in \textbf{bold}.}
\label{tab:sd-lora-main}
\setlength{\tabcolsep}{4pt}
\small
\begin{tabular}{llcc}
\toprule
& Method & In-Dist. & Zero-shot \\
\midrule
\multirow{2}{*}{$SD_{200}$} & ProbeX  & $98.48_{\pm0.48}$ & $94.01_{\pm0.77}$ \\
                            & \ours{} & $\mathbf{99.80}_{\pm0.00}$ & $\mathbf{95.53}_{\pm0.65}$ \\
\midrule
\multirow{2}{*}{$SD_{1k}$}  & ProbeX  & $35.75_{\pm2.44}$ & $52.42_{\pm2.48}$ \\
                            & \ours{} & $\mathbf{97.88}_{\pm0.37}$ & $\mathbf{97.96}_{\pm0.29}$ \\
\bottomrule
\end{tabular}
\vspace{-3mm}
\end{table}

\begin{figure}[!htbp]
    \centering
    \includegraphics[width=\linewidth]{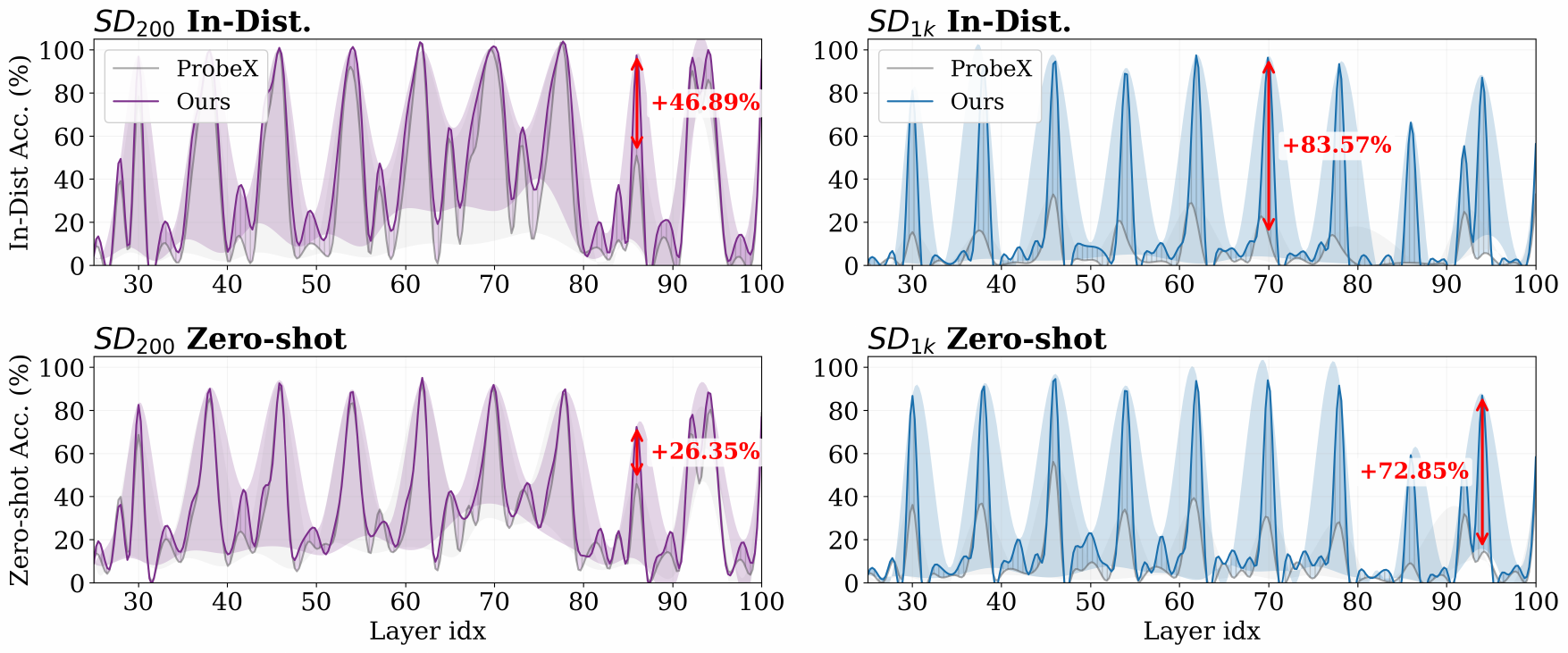}
    \caption{\textbf{Layer-wise performance on SD LoRA.} \ours{} (solid lines) vs.\ ProbeX (dashed lines) across all layers on $SD_{200}$ and $SD_{1k}$ (In-Distribution and Zero-shot). Shaded bands indicate per-layer volatility; red arrows mark the largest gains.}
    \label{fig:layerwise_sd}
\end{figure}

\subsection{Generalization to Broader Weight-Space Tasks}
\label{subsec:retrieval-occ}
\begin{table}[!htbp]
\centering
\caption{\textbf{kNN retrieval and OCC accuracy on SD LoRA.} We report accuracy (\%) across different $k$ values on $SD_{200}$ and $SD_{1k}$ at layer 46. OCC at $k{=}5$ on $SD_{1k}$ is undefined since each class contains only 5 models. Best results are in \textbf{bold}.}
\label{tab:knn}
\resizebox{\columnwidth}{!}{%
\begin{tabular}{lcccccccc}
\toprule
& \multicolumn{4}{c}{kNN} & \multicolumn{4}{c}{OCC} \\
\cmidrule(lr){2-5} \cmidrule(lr){6-9}
& \multicolumn{2}{c}{$SD_{200}$} & \multicolumn{2}{c}{$SD_{1k}$} & \multicolumn{2}{c}{$SD_{200}$} & \multicolumn{2}{c}{$SD_{1k}$} \\
\cmidrule(lr){2-3} \cmidrule(lr){4-5} \cmidrule(lr){6-7} \cmidrule(lr){8-9}
$k$ & ProbeX & \ours{} & ProbeX & \ours{} & ProbeX & \ours{} & ProbeX & \ours{} \\
\midrule
1 & 98.59 & \textbf{99.80} & 21.24 & \textbf{93.99} & 81.46 & \textbf{100.0} & 73.20 & \textbf{99.99} \\
2 & 93.99 & \textbf{99.80} & 17.64 & \textbf{91.18} & 81.55 & \textbf{100.0} & 77.49 & \textbf{99.73} \\
5 & 92.38 & \textbf{99.80} & 13.23 & \textbf{87.98} & 81.34 & \textbf{100.0} & --    & --    \\
\bottomrule
\end{tabular}%
}
\vspace{-3mm}
\end{table}

To further validate the generality of \ours{} beyond closed-set classification, we evaluate two additional tasks on SD LoRA: $k$-nearest-neighbor (kNN) retrieval and open-set classification via one-class classification (OCC). These tasks test whether the learned weight-space representations support retrieval and generalize to unseen classes. As shown in Table~\ref{tab:knn}, \ours{} consistently and substantially outperforms ProbeX. In kNN retrieval, ProbeX degrades sharply on $SD_{1k}$, dropping to $21.24\%$ at $k{=}1$ and $13.23\%$ at $k{=}5$, while \ours{} retains $93.99\%$ and $87.98\%$ respectively. A similar gap is observed for OCC, where \ours{} reaches near-perfect $100\%$ on $SD_{200}$ across all $k$, while ProbeX plateaus around $81\%$. These results suggest that the richer structural information captured by multi-view probing yields representations that are not only more discriminative but also more geometrically coherent in weight space.
\section{Discussion}
\label{sec:discussion}
\begin{table}[!htbp]
\centering
\small
\caption{\textbf{Comparison of FLOPs and training time per epoch.}}
\label{tab:FLOPS}
\begin{tabular}{ccc}
\toprule
Model & FLOPs & Training time (sec) \\
\midrule
ProbeX & 97M & 1.278  \\
ProbeX ($\times$4) & 383M & 1.376  \\
\ours{} & 537M & 1.375 \\
\bottomrule
\end{tabular}
\end{table}

\textbf{Computational overhead.}
One might question the significant computational overhead of \ours{} due to its second-order operations.
Let $m$ and $n$ be the dimensions of the weight matrix and $r$ be the number of probes.
The first-order branches compute $\mathbf{XU}$ and $\mathbf{X}^{\top}\mathbf{V}$, each requiring $O(mnr)$ operations. For the second-order branches, a naive implementation of $\mathbf{X}\mathbf{X}^{\top}\mathbf{W}$ would incur $O(m^{2}n)$ cost from explicitly forming the Gram matrix; however, we compute it associatively as $\mathbf{X}(\mathbf{X}^{\top}\mathbf{W})$ in $O(mnr)$, and similarly compute $\mathbf{X}^{\top}\mathbf{XZ}$ as $\mathbf{X}^{\top}(\mathbf{XZ})$ in $O(mnr)$. Thus \ours{} retains $O(mnr)$ complexity overall, linear in the matrix dimensions and the probe budget.
Consistent with this analysis, \Tref{tab:FLOPS} shows that although \ours{} involves substantially more FLOPs than ProbeX, the actual training time per epoch remains comparable across methods, indicating that these operations are efficiently parallelized on modern GPUs. Full runtime, throughput, and peak-memory profiling across architectures, including SD LoRA at $1280\times1280$, is reported in Table~\ref{tab:efficiency} in Appendix~\ref{app:efficiency_analysis}.

\textbf{Layer selection for ViT-based models.}
Empirically, intermediate-depth layers are generally most reliable across architectures, and for ViT-based models the value projection weights in attention blocks tend to provide more stable accuracy than other weight types; see Appendix~\ref{sec:layer_analysis} for detailed layer-wise results.

\textbf{Higher-order branches.}
Extending \ours{} to third- and fourth-order responses yields architecture-dependent results: gains on ResNet (+1.4\%) and SupViT (+1.0\%), degradation on DINO ($-$1.9\%), and only marginal change on MAE ($-$0.5\%). We therefore adopt the 4-branch (first+second order) design as default and treat higher orders as optional; full ablation in Appendix~\ref{app:order_ablation}.

\textbf{Limitations and future work.}
Despite consistent gains, absolute accuracy remains relatively low on MAE and DINO under the single-layer probing setting. Future work includes understanding which layer types and depths are most error-prone, how weight statistics (e.g., scale, anisotropy, or rank structure) correlate with performance drops, and exploring remedies such as multi-layer aggregation or architecture-aware branch selection.
\vspace{-1mm}

\section{Conclusion}
\label{sec:conclusion}
In this work, we studied the limitations of existing probing approaches for weight-space learning. 
We theoretically characterized a probing ambiguity problem where distinct weight matrices yield identical first-order representations. Motivated by this analysis, we proposed \ours{}, a multi-view probing framework that jointly captures first-order structure and second-order (Gram-based) interactions. 
By incorporating kernel-based probing and per-sample standardization, \ours{} addresses the ambiguity of prior methods while remaining computationally efficient and scalable to large models. 
We also provided a unified interpretation through the lenses of kernel methods and landmark-based representations, clarifying why interaction-aware probing is helpful for reliable weight-based model identification. 
Extensive experiments on the Model Jungle benchmark show that \ours{} consistently outperforms baselines across diverse architectures and layers, including regimes where prior methods degrade toward chance-level performance.

\section*{Impact Statement}
This work contributes to weight-space learning, supporting legitimate use cases such as model identification, attribution, and quality control in large open-source repositories. The ability to distinguish models from their weights also carries potential risks (e.g., unsanctioned identification of fine-tuned derivatives), but \ours{} operates only on voluntarily shared public checkpoints and does not recover training data. We do not foresee novel ethical concerns beyond those already present in the broader weight-space learning literature.

\bibliography{icml2026}
\bibliographystyle{icml2026}

\newpage
\clearpage
\appendix

\section{Implementation Details}
\label{app:impl}

\subsection{Architecture Details}
\label{app:architecture}

\textbf{Probe Matrices.} Each of the four branches uses its own learnable probe matrix:
\begin{itemize}
    \item Row probing: $\mathbf{U} \in \mathbb{R}^{n \times r}$
    \item Column probing: $\mathbf{V} \in \mathbb{R}^{m \times r}$
    \item Row kernel probing: $\mathbf{W} \in \mathbb{R}^{m \times r}$
    \item Column kernel probing: $\mathbf{Z} \in \mathbb{R}^{n \times r}$
\end{itemize}
All probe matrices are initialized using Xavier uniform initialization.

\textbf{Projection Layers.} Each branch has a dedicated projection layer that maps the flattened probe response to a $d$-dimensional vector:
\begin{itemize}
    \item Row probing: $\pi_1: \mathbb{R}^{m \times r} \rightarrow \mathbb{R}^{d}$
    \item Column probing: $\pi_2: \mathbb{R}^{n \times r} \rightarrow \mathbb{R}^{d}$
    \item Row kernel probing: $\pi_3: \mathbb{R}^{m \times r} \rightarrow \mathbb{R}^{d}$
    \item Column kernel probing: $\pi_4: \mathbb{R}^{n \times r} \rightarrow \mathbb{R}^{d}$
\end{itemize}
Each projection layer is a single linear layer followed by ReLU activation.

\textbf{Classification Head.} The concatenated features from all four branches (dimension $4d$) are passed through a single linear layer to produce class logits.

\subsection{Probe Response Standardization}
\label{app:standardization}

Before projection, we apply per-sample standardization to each probe response matrix:
\begin{align}
    \tilde{\mathbf{S}} = \frac{\mathbf{S} - \mu_{\mathbf{S}}}{\sigma_{\mathbf{S}} + \epsilon},
\end{align}
where $\mu_{\mathbf{S}}$ and $\sigma_{\mathbf{S}}$ are the mean and standard deviation computed over all entries of $\mathbf{S}$ for each sample in the batch, and $\epsilon = 10^{-8}$ prevents division by zero. This standardization ensures consistent input scales regardless of the weight matrix magnitude or architecture.

\subsection{Efficient Gram Matrix Computation}
\label{app:gram}

A naive implementation of the second-order branches would first compute the full Gram matrix ($O(m^2 n)$ or $O(n^2 m)$), then multiply by the probe matrix. We avoid this by exploiting associativity:
\begin{align}
    \mathbf{XX}^{\top}\mathbf{W} &= \mathbf{X}(\mathbf{X}^{\top}\mathbf{W}), \\
    \mathbf{X}^{\top}\mathbf{XZ} &= \mathbf{X}^{\top}(\mathbf{XZ}).
\end{align}
Each computation now requires $O(mnr)$ operations, the same complexity as first-order branches. This makes the second-order branches computationally efficient even for large weight matrices.

\subsection{Hyperparameters}
\label{app:hyperparams}

Table~\ref{tab:hyperparams} summarizes the hyperparameters used in our experiments.

\begin{table}[!htbp]
\centering
\caption{Hyperparameters used in \ours{}.}
\label{tab:hyperparams}
\small
\begin{tabular}{lc}
\toprule
\textbf{Hyperparameter} & \textbf{Value} \\
\midrule
Probes per branch ($r$) & 128 \\
Projection dim ($d$) & 128 \\
Batch size & 128 \\
Learning rate & $3 \times 10^{-4}$ \\
Weight decay & $10^{-5}$ \\
Epochs & 500 \\
Optimizer & Adam \\
Loss & BCE \\
\bottomrule
\end{tabular}
\end{table}

\section{Proofs of Theoretical Results}
\label{app:proofs}

\subsection{Proof of Theorem~\ref{thm:second_order} (Expressiveness of Second-Order Probes)}

\begin{proof}
Since $\mathrm{rank}(\mathbf{U})=r<n$, the left nullspace of $\mathbf{U}$ is nontrivial, so there exists $\mathbf{w}\neq \mathbf{0}\in\mathbb{R}^{n}$ such that $\mathbf{w}^\top \mathbf{U}=\mathbf{0}^\top$.

Choose any $\mathbf{v}\in\mathbb{R}^{n}$ such that $\mathbf{v}^\top \mathbf{U}\neq \mathbf{0}^\top$ (such a $\mathbf{v}$ exists unless $\mathbf{U}=\mathbf{0}$, which is excluded by $\mathrm{rank}(\mathbf{U})=r$). Pick any $\mathbf{a}\neq \mathbf{0}\in\mathbb{R}^{m}$ and define:
\begin{equation}
\mathbf{X}_1 := \mathbf{a}\mathbf{v}^\top,
\qquad
\mathbf{X}_2 := \mathbf{a}(\mathbf{v}+\mathbf{w})^\top.
\end{equation}

\textbf{Step 1: First-order features are identical.}
\begin{align}
\mathbf{X}_2\mathbf{U}
&= \mathbf{a}(\mathbf{v}+\mathbf{w})^\top \mathbf{U} \notag \\
&= \mathbf{a}\mathbf{v}^\top\mathbf{U} + \mathbf{a}\mathbf{w}^\top\mathbf{U} = \mathbf{X}_1\mathbf{U},
\end{align}
since $\mathbf{w}^\top\mathbf{U}=\mathbf{0}^\top$.

\textbf{Step 2: Second-order features differ.}
For any rank-one matrix $\mathbf{X}=\mathbf{a}\mathbf{p}^\top$, we have $\mathbf{X}^\top\mathbf{X}=\|\mathbf{a}\|_2^2\,\mathbf{p}\mathbf{p}^\top$. Therefore:
\begin{align}
(\mathbf{X}_1^\top\mathbf{X}_1)\mathbf{U}&=\|\mathbf{a}\|_2^2\,\mathbf{v}\mathbf{v}^\top \mathbf{U}, \\
(\mathbf{X}_2^\top\mathbf{X}_2)\mathbf{U}&=\|\mathbf{a}\|_2^2\,(\mathbf{v}{+}\mathbf{w})(\mathbf{v}{+}\mathbf{w})^\top \mathbf{U}.
\end{align}

Taking the difference and expanding:
\begin{align}
&(\mathbf{X}_2^\top\mathbf{X}_2)\mathbf{U}-(\mathbf{X}_1^\top\mathbf{X}_1)\mathbf{U} \\
&=\|\mathbf{a}\|_2^2\big(\mathbf{w}\mathbf{v}^\top\mathbf{U}+\mathbf{v}\mathbf{w}^\top\mathbf{U}+\mathbf{w}\mathbf{w}^\top\mathbf{U}\big) \\
&=\|\mathbf{a}\|_2^2\,\mathbf{w}\mathbf{v}^\top\mathbf{U} \neq \mathbf{0},
\end{align}
where we used $\mathbf{w}^\top\mathbf{U}=\mathbf{0}^\top$ to eliminate the last two terms. Since $\mathbf{w}\neq \mathbf{0}$ and $\mathbf{v}^\top\mathbf{U}\neq \mathbf{0}^\top$, their outer product is nonzero.
\end{proof}

\subsection{Proof of Theorem~\ref{thm:transpose} (Transpose-Complement Non-Redundancy)}

\begin{proof}
Since $\mathrm{rank}(\mathbf{U})=r<n$, there exists $\mathbf{w}\in\mathbb{R}^n\setminus\{\mathbf{0}\}$ with $\mathbf{w}^\top\mathbf{U}=\mathbf{0}^\top$. Choose $\mathbf{a}\in\mathbb{R}^m$ such that $\mathbf{V}^\top\mathbf{a}\neq\mathbf{0}$ (possible since $\mathbf{V}\neq\mathbf{0}$). Define
\[
\mathbf{X}_2 = \mathbf{X}_1 + \mathbf{a}\mathbf{w}^\top.
\]
Then $\mathbf{X}_2\mathbf{U} = \mathbf{X}_1\mathbf{U} + \mathbf{a}(\mathbf{w}^\top\mathbf{U}) = \mathbf{X}_1\mathbf{U}$, while
\[
\mathbf{X}_2^{\top}\mathbf{V} - \mathbf{X}_1^{\top}\mathbf{V}
= \mathbf{w}\,\mathbf{a}^\top\mathbf{V} \neq \mathbf{0},
\]
so the column-side response distinguishes the two matrices that are indistinguishable on the row side.
\end{proof}

\subsection{Uniform Domination of ProbeX($\times 4$)}
\label{app:uniform-domination}

Below we formalize the relation between \ours{} and ProbeX($\times 4$) under fixed probes.

\textbf{Definition (Uniform Domination).} Let $H(\mathbf{X})=\mathbf{X}\widetilde{\mathbf{U}}$ be the ProbeX($\times 4$) feature with stacked first-order bank $\widetilde{\mathbf{U}}\in\mathbb{R}^{n\times 4r}$, and let $G(\mathbf{X})=(\mathbf{X}\mathbf{U},\mathbf{X}^{\top}\mathbf{V},\mathbf{X}\mathbf{X}^{\top}\mathbf{W},\mathbf{X}^{\top}\mathbf{X}\mathbf{Z})$ be the \ours{} feature. We say \ours{} \emph{uniformly dominates} ProbeX($\times 4$) if $G(\mathbf{X}_1)=G(\mathbf{X}_2)\Rightarrow H(\mathbf{X}_1)=H(\mathbf{X}_2)$ for all $\mathbf{X}_1,\mathbf{X}_2$.

\textbf{Proposition (Sufficient Condition).} If $\mathrm{Col}(\widetilde{\mathbf{U}})\subseteq\mathrm{Col}(\mathbf{U})$, then \ours{} uniformly dominates ProbeX($\times 4$).

\textit{Proof.} Assume $\mathrm{Col}(\widetilde{\mathbf{U}})\subseteq\mathrm{Col}(\mathbf{U})$. Then there exists a matrix $\mathbf{C}$ such that $\widetilde{\mathbf{U}}=\mathbf{U}\mathbf{C}$. Now suppose $G(\mathbf{X}_1)=G(\mathbf{X}_2)$. Since $\mathbf{X}\mathbf{U}$ is the first component of $G$, we have $\mathbf{X}_1\mathbf{U}=\mathbf{X}_2\mathbf{U}$. Therefore,
\begin{align*}
H(\mathbf{X}_1)
&= \mathbf{X}_1\widetilde{\mathbf{U}}
= \mathbf{X}_1\mathbf{U}\mathbf{C}
= (\mathbf{X}_1\mathbf{U})\mathbf{C} \\
&= (\mathbf{X}_2\mathbf{U})\mathbf{C}
= \mathbf{X}_2\mathbf{U}\mathbf{C}
= \mathbf{X}_2\widetilde{\mathbf{U}}
= H(\mathbf{X}_2).
\end{align*}
Thus $G(\mathbf{X}_1)=G(\mathbf{X}_2)\Rightarrow H(\mathbf{X}_1)=H(\mathbf{X}_2)$, which is exactly uniform domination of ProbeX($\times 4$) by \ours{}. \hfill$\square$

\textbf{Corollary (Full-Rank XU Branch Subsumes Any First-Order Bank).} If $\mathrm{rank}(\mathbf{U})=n$, then $\mathrm{Col}(\mathbf{U})=\mathbb{R}^n$, so any ProbeX($\times 4$) bank $\widetilde{\mathbf{U}}\in\mathbb{R}^{n\times 4r}$ satisfies $\mathrm{Col}(\widetilde{\mathbf{U}})\subseteq\mathrm{Col}(\mathbf{U})$ trivially. Hence \ours{} uniformly dominates every ProbeX($\times 4$) implementation in this regime; once the row-side first-order branch is sufficiently rich, additional first-order probes can no longer add recoverable information, and further gains must come from the complementary branches $\mathbf{X}^{\top}\mathbf{V}$ and the Gram-based second-order branches.

\subsection{Proof of Theorem~\ref{thm:scale} (Scale Imbalance)}

\begin{proof}
\textbf{First-order branch scale.}
Write $\mathbf{S}^{(1)}=\mathbf{X}\mathbf{U}$ where $\mathbf{U}$ has unit-norm columns. For entry $(i,k)$:
\[
(\mathbf{X}\mathbf{U})_{ik}=\sum_{j=1}^{n} X_{ij}U_{jk}.
\]
Since entries $X_{ij}$ are i.i.d.\ with mean $0$ and variance $\sigma^2$:
\[
\mathbb{E}\!\left[(\mathbf{X}\mathbf{U})_{ik}^2\right]
= \sum_{j=1}^{n} U_{jk}^2\,\mathbb{E}[X_{ij}^2]
= \sigma^2\|\mathbf{U}_{:,k}\|_2^2
= \sigma^2.
\]
Summing over all entries:
\[
\mathbb{E}\|\mathbf{S}^{(1)}\|_F^2 = m \cdot r \cdot \sigma^2.
\]

\textbf{Second-order branch scale.}
Let $\mathbf{G}=\mathbf{X}\mathbf{X}^\top\in\mathbb{R}^{m\times m}$. We need $\mathbb{E}[\mathbf{G}^2]$.

Write $\mathbf{X}=[\mathbf{x}_1,\dots,\mathbf{x}_{n}]$ as columns, where $\mathbf{x}_\ell\sim\mathcal{N}(0,\sigma^2\mathbf{I}_{m})$ are independent. Then $\mathbf{G}=\sum_{\ell=1}^{n}\mathbf{x}_\ell\mathbf{x}_\ell^\top$.

For the diagonal term $(\mathbf{x}\mathbf{x}^\top)^2=\|\mathbf{x}\|_2^2\,\mathbf{x}\mathbf{x}^\top$. By rotational symmetry:
\[
\mathbb{E}\!\left[(\mathbf{x}\mathbf{x}^\top)^2\right]=(m+2)\sigma^4\mathbf{I}.
\]

For cross terms ($\ell\neq k$):
\[
\mathbb{E}\!\left[(\mathbf{x}_\ell\mathbf{x}_\ell^\top)(\mathbf{x}_k\mathbf{x}_k^\top)\right]
= \mathbb{E}[\mathbf{x}_\ell\mathbf{x}_\ell^\top]\mathbb{E}[\mathbf{x}_k\mathbf{x}_k^\top]
= \sigma^4\mathbf{I}.
\]

Combining:
\begin{align}
\mathbb{E}[\mathbf{G}^2] &= n(m+2)\sigma^4\mathbf{I} + n(n-1)\sigma^4\mathbf{I} \notag \\
&= n(n+m+1)\sigma^4\mathbf{I}.
\end{align}

Therefore:
\[
\mathbb{E}\|\mathbf{S}^{(2)}\|_F^2 = \mathrm{tr}(\mathbf{W}^\top\mathbb{E}[\mathbf{G}^2]\mathbf{W}) = n(n+m+1)\sigma^4 \cdot r.
\]

The ratio follows by division.
\end{proof}

\section{Connection to Kernel Methods}
\label{app:kernel}

Our second-order probing branches have a natural interpretation through the lens of kernel methods. Given a weight matrix $\mathbf{X} \in \mathbb{R}^{m \times n}$, the row kernel matrix $\mathbf{K}_{\text{row}} = \mathbf{XX}^{\top}$ defines a linear kernel on the rows of $\mathbf{X}$.

\textbf{Kernel PCA Connection.} In kernel PCA~\cite{scholkopf1998nonlinear}, one computes the eigenvectors of the centered kernel matrix to find principal components in the feature space. Our row kernel probing $\mathbf{XX}^{\top}\mathbf{W}$ can be viewed as learning \textit{task-specific} directions in this kernel space, rather than using the data-agnostic principal components.

\textbf{Random Features Connection.} Random features~\cite{rahimi2007random} approximate kernel functions by projecting data onto random directions. Our learned probe vectors $\mathbf{W}$ serve a similar role, but are optimized for the downstream classification task rather than approximating a specific kernel.

\textbf{Why Kernel Probing Helps.} The kernel matrix $\mathbf{XX}^{\top}$ encodes pairwise relationships that the original matrix $\mathbf{X}$ does not directly expose. Entry $(i,j)$ of the kernel matrix measures the similarity between rows $i$ and $j$. By probing through this matrix, we extract features that capture how rows relate to each other---information that first-order probing $\mathbf{XU}$ cannot access since it treats each row independently.

\section{Additional Experimental Results}
\label{app:additional}

\subsection{Higher-Order Branch Ablation}
\label{app:order_ablation}

We extend \ours{} (up to 2nd order; 4 branches) by adding third-order (6 branches) and fourth-order (8 branches) responses.
The third-order branches use $\mathbf{X}\mathbf{X}^{\top}\mathbf{X}$ and $\mathbf{X}^{\top}\mathbf{X}\mathbf{X}^{\top}$; the fourth-order branches use $(\mathbf{X}\mathbf{X}^{\top})^{2}$ and $(\mathbf{X}^{\top}\mathbf{X})^{2}$.
As shown in Table~\ref{tab:order_ablation}, the impact is architecture-dependent: third--fourth order branches improve ResNet and SupViT (peaking at the 8-branch setting) but decrease DINO and yield only marginal changes on MAE. We therefore adopt the 4-branch (first+second order) design as the default for robust performance.

\begin{table}[!htbp]
\centering
\caption{\textbf{Effect of higher-order extensions.} Best results are in \textbf{bold} and the second-best are \underline{underlined}.}
\label{tab:order_ablation}
\resizebox{\columnwidth}{!}{%
\begin{tabular}{lcccc}
\toprule
Order (branches) & ResNet & SupViT & MAE & DINO \\
\midrule
\ours{} & 92.24 & 92.33 & \textbf{81.62} & \textbf{78.29} \\
 + up to 3rd (6)         & \underline{93.15} & \underline{92.36} & 80.63 & \underline{77.02} \\
 + up to 4th (8)         & \textbf{93.69} & \textbf{93.34} & \underline{81.16} & 76.38 \\
\bottomrule
\end{tabular}%
}
\end{table}

\subsection{Detailed Per-Architecture Analysis}

Table~\ref{tab:detailed} provides a detailed breakdown of results across all architectures and layer configurations.

\begin{table}[!htbp]
\centering
\caption{Detailed accuracy comparison across architectures.}
\label{tab:detailed}
\small
\begin{tabular}{lcccr}
\toprule
Arch & Layer & ProbeX & \ours{} & $\Delta$ \\
\midrule
ResNet & 59 & 83.23 & 88.40 & +5.17 \\
SupViT & 59 & 89.42 & 92.11 & +2.69 \\
MAE & 64 & 77.14 & 81.62 & +4.48 \\
DINO & 59 & 70.50 & 76.48 & +5.98 \\
\midrule
\multicolumn{2}{l}{\textbf{Avg}} & 80.07 & 84.65 & +4.58 \\
\bottomrule
\end{tabular}
\end{table}

\subsection{Win Rate Across All Layers}

We evaluate \ours{} against ProbeX across all available layers in each architecture. Table~\ref{tab:winrate} shows the win rate (percentage of layers where \ours{} outperforms ProbeX).

\begin{table}[!htbp]
\centering
\caption{Win rate across all layers.}
\label{tab:winrate}
\begin{tabular}{lcc}
\toprule
Architecture & \# Layers & Win Rate (\%) \\
\midrule
ResNet101 & 105 & 94.3 \\
SupViT & 74 & 95.9 \\
MAE & 74 & 93.2 \\
DINO & 74 & 97.3 \\
\midrule
\textbf{Total} & 327 & 95.1 \\
\bottomrule
\end{tabular}
\end{table}

\ours{} outperforms ProbeX on 95.1\% of all layers (311 out of 327), demonstrating consistent improvements across the entire layer spectrum.

\section{Architectural Analysis}
\label{sec:layer_analysis}
\subsection{Performance Analysis Over Depth}
\begin{table}[!htbp]
\centering
\caption{Stage-wise probing performance of ResNet architecture stages.}
\label{tab:resnet-layerwise}
\begin{tabular}{cc}
\toprule
\textbf{ResNet (Layer Indices)} & \textbf{Mean Acc. (\%)} \\
\midrule
Stage 0 (0--12)  & 54.18 \\
Stage 1 (13--24)  & 58.59 \\
Stage 2 (25--61)  & \textbf{80.76} \\
Stage 3 (62--104)  & 77.98 \\
\bottomrule
\end{tabular}
\end{table}

\begin{table}[!htbp]
\centering
\caption{Block-wise probing performance of ViT-based models (SupViT, MAE, and DINO). Each model is divided into three equal regimes: Early (0--24), Mid (25--49), and Late (50--74).}
\label{tab:vit-stage-performance}
\begin{tabular}{lccc}
\toprule
& \multicolumn{3}{c}{\textbf{Mean Accu. (\%)}} \\
\cmidrule(lr){2-4}
 & \textbf{DINO} & \textbf{MAE} & \textbf{SupViT} \\
\midrule
Early (0--24) & 56.71 & 55.67 & 62.22 \\
Mid (25--49)  & 66.14 & 63.56 & 76.27 \\
Late (50--74) & 66.39 & \textbf{69.49} & \textbf{80.87} \\
\bottomrule
\end{tabular}
\end{table}

To further investigate the information capacity of different architectural depths, we analyze the average accuracy across functionally identical layers for ResNet and ViT-based models. As demonstrated in~\Tref{tab:resnet-layerwise} and~\Tref{tab:vit-stage-performance}, the trajectory of performance improvement varies significantly between convolutional and transformer-based designs. As shown in Table \ref{tab:resnet-layerwise}, the performance of the ResNet model exhibits a significant jump as the depth increases. While the early stages (Stage 0 and 1) maintain a modest accuracy, Stage 2 and 3 achieve the peak performance of 80.76\% and 77.98\%. This suggests that the features extracted in the middle-to-late layers of the convolution architecture are the most separable and semantically rich for the weight classification. Table \ref{tab:vit-stage-performance} illustrates the performance of SupViT, MAE, and DINO across three blocks: Early, Mid, and Late. SupViT remarkably outperforms other self-supervised counterparts (DINO and MAE) across all blocks. Both MAE and SupViT achieve their highest accuracy in the Late regime (Blocks 50--74), confirming that Transformer architectures successfully refine global context and semantic depth as the layers progress.

\subsection{Performance Analysis Over Functional Layers}
\begin{table}[!htbp]
\centering
\caption{Analysis of accuracy deltas ($\Delta$) across all models on \ours{}, grouped by functionally identical layers. $\mu(\Delta)$ represents the average performance gain/loss per layer type.}
\label{tab:comprehensive-delta-stats}
\begin{tabular}{llc}
\toprule
\textbf{Architecture} & \textbf{Layer Type} & $\mu(\Delta)$ (\%) \\ 
\midrule
\multirow{4}{*}{ResNet}
& 1$\times$1 Conv (down) & -4.21 \\
& 3$\times$3 Conv & +0.34  \\
& 1$\times$1 Conv (up) & \textbf{+4.36} \\
& Shortcut (Skip) & -0.26 \\
\midrule
\multirow{4}{*}{SupViT}
& Attention key / query & +1.36  \\
& Attention value ($W_V$) & \textbf{+6.23}  \\
& Attention Output & +1.47  \\
& Intermediate MLP & -2.61  \\
\midrule
\multirow{4}{*}{MAE}
& Attention key / query & +2.8  \\
& Attention value ($W_V$) & \textbf{+3.76} \\
& Attention Output & -1.74  \\
& Intermediate MLP & -3.22  \\
\midrule
\multirow{4}{*}{DINO}
& Attention key / query & -0.51  \\
& Attention value ($W_V$) & \textbf{+2.72}  \\
& Attention Output & +1.56  \\
& Intermediate MLP & -0.36  \\
\bottomrule
\end{tabular}
\end{table}

To understand the specific contributions of various architectural components, we categorize layers by their functional roles and calculate the mean accuracy delta ($\mu(\Delta)$) on \ours{}. The delta represents the performance gain or loss relative to the preceding layer. The comprehensive results are summarized in~\Tref{tab:comprehensive-delta-stats}. We sort the performances across the functional layers as 1) 1x1 convolution for downsampling the channel, 2) 3x3 convolution, 3) 1x1 convolution for upsampling the channel for ResNet. For the case of ViT-based architectures, we decompose as 1) key / query layer in attention block, 2) value layer, 3) MLP at the end of attention block, 4) MLP layer at inter-blocks. ResNet architecture demonstrates a distinct functional hierarchy. The highest gain ($+4.36\%$) is observed in the 1$\times$1 Conv (up) layers, which are responsible for projecting features into a higher-dimensional space before the residual summation. This suggests that these expansion layers are critical for creating separable representations. All ViT variants (SupViT, MAE, DINO) demonstrate consistent improvement on Attention Value ($W_V$) layers, with $+6.23\%$, $+3.76\%$, $+2.72\%$, respectively. This suggests that while the Key and Query matrices manage spatial routing, the Value matrix serves as the primary repository for task-relevant semantic features that could be used for the weight classification task.

\subsection{Statistical Details for Standardization Ablation}
\label{app:std_stats}

\begin{table}[!htbp]
\centering
\small
\setlength{\tabcolsep}{4pt}
\caption{\textbf{Overall statistical tests for standardization.}
$\Delta_{\ell}$ denotes per-layer accuracy gain (pp).}
\label{tab:app-std-overall}
\begin{tabular}{lc}
\toprule
Statistic & Value \\
\midrule
\# layers ($N$) & 325 \\
Mean gain ($\overline{\Delta}$) & 2.816 pp \\
Std.\ ($s_{\Delta}$) & 3.230 pp \\
95\% CI for $\overline{\Delta}$ & $[2.463,\,3.168]$ pp \\
One-sample $t$-test on $\Delta$ & $t(324)=15.715,\ p=4.62{\times}10^{-42}$ \\
Positive-gain layers & $290/325$ (89.23\%) \\
Sign test (one-sided) & $p=1.98{\times}10^{-51}$ \\
\bottomrule
\end{tabular}
\end{table}

\begin{table}[!htbp]
\centering
\small
\setlength{\tabcolsep}{4pt}
\caption{\textbf{Per-model statistical tests.} One-sided sign tests are applied to $\Delta_{\ell}$.}
\label{tab:app-std-by-model}
\begin{tabular}{lcccc}
\toprule
Model & $N$ & $\overline{\Delta}$ (pp) & $k/N$ (\%) & $p$ \\
\midrule
DINO   & 73  & 4.158 & 71/73 (97.26)   & $2.86{\times}10^{-19}$ \\
MAE    & 73  & 0.822 & 56/73 (76.71)   & $2.63{\times}10^{-6}$ \\
ResNet & 105 & 4.140 & 101/105 (96.19) & $1.23{\times}10^{-25}$ \\
SupViT & 74  & 1.579 & 62/74 (83.78)   & $1.43{\times}10^{-9}$ \\
\bottomrule
\end{tabular}
\end{table}

\begin{table}[!htbp]
\centering
\small
\setlength{\tabcolsep}{3.5pt}
\caption{\textbf{Depth-group statistics for standardization gains.}
$\overline{\Delta}$ is the mean gain (pp) and $k/N$ counts layers with $\Delta_{\ell}>0$.}
\label{tab:app-std-depth}
\begin{tabular}{llccc}
\toprule
Family & Depth group & $N$ & $\overline{\Delta}$ (pp) & $k/N$ (\%) \\
\midrule
\multirow{3}{*}{ViT-based}
 & Early (0--3)  & 72 & 2.166 & 69/72 (95.83) \\
 & Middle (4--7) & 72 & 2.856 & 62/72 (86.11) \\
 & Late (8--11)  & 70 & 1.620 & 53/70 (75.71) \\
\cmidrule{2-5}
 & KW on $\Delta$ & \multicolumn{3}{c}{$H(2)=10.849,\ p=4.41{\times}10^{-3}$} \\
\midrule
\multirow{3}{*}{ResNet}
 & Early (0--1) & 23 & 1.843 & 22/23 (95.65) \\
 & Middle (2)   & 70 & 5.070 & 69/70 (98.57) \\
 & Late (3)     & 10 & 2.924 & 8/10 (80.00) \\
\cmidrule{2-5}
 & KW on $\Delta$ & \multicolumn{3}{c}{$H(2)=16.934,\ p=2.10{\times}10^{-4}$} \\
\bottomrule
\end{tabular}
\end{table}

\paragraph{Setup and notation.}
For each layer $\ell$, let $a_{\ell}^{\mathrm{w/o}}$ and $a_{\ell}^{\mathrm{w/}}$ denote accuracies (\%) without and with per-sample standardization.
We define the per-layer gain (percentage points) as
\begin{equation}
\Delta_{\ell}=a_{\ell}^{\mathrm{w/}}-a_{\ell}^{\mathrm{w/o}}.
\end{equation}
All tests are one-sided, aligned with the hypothesis that standardization improves performance.

\paragraph{Overall effect.}
Table~\ref{tab:app-std-overall} summarizes the aggregate statistics.
Across all models ($N=325$), the mean gain is $\overline{\Delta}=2.816$ pp with sample standard deviation $s_{\Delta}=3.230$ pp.
A one-sample $t$-test on $\Delta_{\ell}$ gives $t(324)=15.715$, $p=4.62\times 10^{-42}$, with 95\% CI $[2.463,\,3.168]$ pp.
Moreover, $k=290$ layers have $\Delta_{\ell}>0$ (89.23\%), and a one-sided sign test gives $p=1.98\times 10^{-51}$.

\paragraph{By model family.}
Table~\ref{tab:app-std-by-model} reports layer counts, mean gains, and one-sided sign-test $p$-values per family.

\paragraph{Depth-dependent effects.}
We compare the distributions of $\Delta_{\ell}$ across depth groups using the Kruskal--Wallis test, with per-group means and positive-gain counts reported in Table~\ref{tab:app-std-depth}.
For ViT-based models, we group transformer blocks into Early (0--3), Middle (4--7), and Late (8--11), excluding non-block layers.
For ResNet, we group layers by encoder stage: Early (stages 0--1), Middle (stage 2), and Late (stage 3), excluding non-stage layers (e.g., embedder/classifier).

For ViT-based models, Kruskal--Wallis on $\Delta_{\ell}$ yields $H(2)=10.849$, $p=4.41\times 10^{-3}$.
For ResNet, it yields $H(2)=16.934$, $p=2.10\times 10^{-4}$.

\section{Overview of Neuron Interpretation}
\label{sec:Neuron}

\begin{figure*}[!htbp] 
    \centering
    \includegraphics[width=1.0\linewidth]{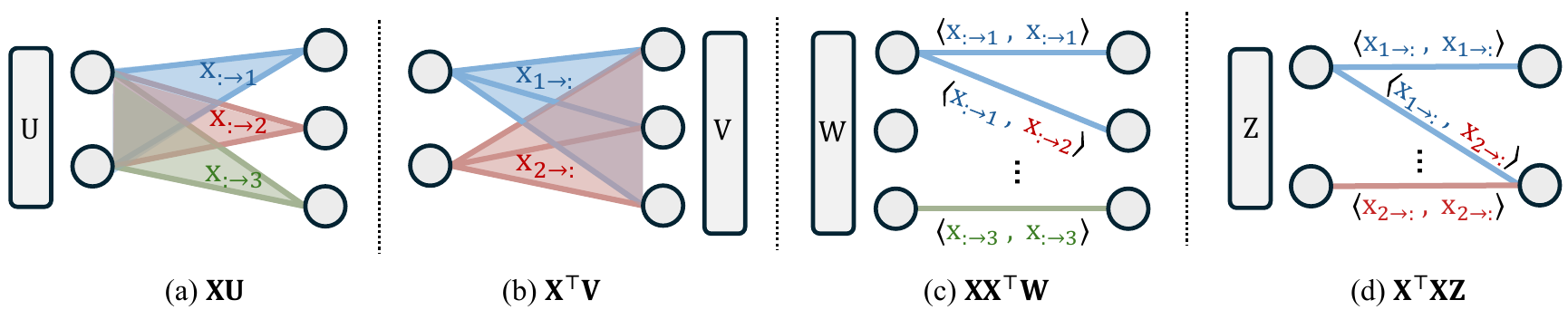}
    \caption{\textbf{Neuron-level interpretation of a weight matrix.}
    Illustration for $\mathbf{X}\in\mathbb{R}^{3\times2}$ (two input neurons, three output neurons).
    (a) $\mathbf{XU}$ corresponds to responses at each output neuron $i$, driven by its incoming weight pattern $\mathbf{X}_{:\rightarrow i}$.
    (b) $\mathbf{X}^{\top}\mathbf{V}$ corresponds to responses at each input neuron $j$, driven by its outgoing weight pattern $\mathbf{X}_{j\rightarrow :}$.
    (c) $\mathbf{XX}^{\top}\mathbf{W}$ probes the row-similarity structure $\langle \mathbf{X}_{:\rightarrow i}, \mathbf{X}_{:\rightarrow j}\rangle$.
    (d) $\mathbf{X}^{\top}\mathbf{X}\mathbf{Z}$ probes the column-similarity structure $\langle \mathbf{X}_{p\rightarrow :}, \mathbf{X}_{q\rightarrow :}\rangle$.}
    \label{fig:neuron}
\end{figure*}

\vspace{-0.5em}
\paragraph{Directional notation as message passing.}
Consider a linear layer $\mathbf{y} = \mathbf{X} \mathbf{x}$, where $\mathbf{x}\in\mathbb{R}^{n}$ denotes input activations (input neurons) and $\mathbf{y}\in\mathbb{R}^{m}$ denotes output activations (output neurons), with weights $\mathbf{X}\in\mathbb{R}^{m\times n}$.
We view $\mathbf{X}$ as a directed bipartite message-passing operator from inputs to outputs:
each edge $(j\rightarrow i)$ carries the message $x_j$ scaled by weight $X_{i,j}$.

To make this directionality explicit, we define two complementary \emph{weight patterns}:
\begin{align}
\mathbf{X}_{:\rightarrow i} &:= \mathbf{X}_{i,:}\in\mathbb{R}^{n}, \\
\mathbf{X}_{j\rightarrow :} &:= \mathbf{X}_{:,j}\in\mathbb{R}^{m}.
\end{align}
Here, $\mathbf{X}_{:\rightarrow i}$ denotes the incoming weight pattern to output neuron $i$,
while $\mathbf{X}_{j\rightarrow :}$ denotes the outgoing weight pattern from input neuron $j$.

\paragraph{First-order probing as measuring alignment to learned directions.}
Let $\mathbf{U}=[\mathbf{u}_1,\dots,\mathbf{u}_r]\in\mathbb{R}^{n\times r}$ be learnable probe directions in the input space.
The row probing branch
\begin{equation}
\mathbf{S}^{\text{row}}=\mathbf{X}\mathbf{U} \in\mathbb{R}^{m\times r}
\end{equation}
can be read neuron-wise as
\begin{equation}
\mathbf{S}^{\text{row}}_{i,k} = \langle \mathbf{X}_{:\rightarrow i}, \mathbf{u}_k\rangle,
\end{equation}
i.e., the $k$-th response at output neuron $i$ is the inner product between its incoming weight pattern and probe direction $\mathbf{u}_k$.
Thus, $\mathbf{X}\mathbf{U}$ summarizes each output neuron by how strongly it aligns with $r$ learned input-space landmarks.

Similarly, with $\mathbf{V}=[\mathbf{v}_1,\dots,\mathbf{v}_r]\in\mathbb{R}^{m\times r}$,
the column probing branch
\begin{equation}
\mathbf{S}^{\text{col}}=\mathbf{X}^{\top}\mathbf{V} \in\mathbb{R}^{n\times r}
\end{equation}
admits the interpretation
\begin{equation}
\mathbf{S}^{\text{col}}_{j,k} = \langle \mathbf{X}_{j\rightarrow :}, \mathbf{v}_k\rangle,
\end{equation}
so each input neuron/feature $j$ is summarized by how its outgoing weight pattern aligns with $r$ learned output-space landmarks.

\paragraph{Gram matrices as intrinsic similarity maps over neurons.}
Beyond alignment to external probe directions, the weight matrix itself induces \emph{intrinsic similarity graphs}.

\textbf{Row (output-neuron) similarity.}
The row Gram matrix
\begin{equation}
\mathbf{K}_{\text{row}}=\mathbf{X}\mathbf{X}^{\top}\in\mathbb{R}^{m\times m}
\end{equation}
has entries
\begin{equation}
(\mathbf{K}_{\text{row}})_{i,j} = \langle \mathbf{X}_{:\rightarrow i}, \mathbf{X}_{:\rightarrow j}\rangle,
\end{equation}
measuring how similar two output neurons are in terms of \emph{which inputs they emphasize}.
Large values indicate that neurons $i$ and $j$ place weight on similar input directions, forming a correlation structure among output neurons.

\textbf{Column (input-feature) similarity.}
The column Gram matrix
\begin{equation}
\mathbf{K}_{\text{col}}=\mathbf{X}^{\top}\mathbf{X}\in\mathbb{R}^{n\times n}
\end{equation}
has entries
\begin{equation}
(\mathbf{K}_{\text{col}})_{p,q} = \langle \mathbf{X}_{p\rightarrow :}, \mathbf{X}_{q\rightarrow :}\rangle,
\end{equation}
measuring how similar two input features are in terms of \emph{how they affect all outputs}.
This yields a correlation structure among input features.

\paragraph{Second-order probing as message passing on similarity graphs.}
The second-order branches perform message passing not on the original bipartite input$\rightarrow$output graph,
but on the induced similarity graphs defined by $\mathbf{K}_{\text{row}}$ and $\mathbf{K}_{\text{col}}$.

\textbf{Row-kernel probing.}
With $\mathbf{W}=[\mathbf{w}_1,\dots,\mathbf{w}_r]\in\mathbb{R}^{m\times r}$,
\begin{equation}
\mathbf{S}^{\text{row-kernel}} = \mathbf{K}_{\text{row}}\mathbf{W} = \mathbf{X}\mathbf{X}^{\top}\mathbf{W} \in\mathbb{R}^{m\times r}.
\end{equation}
For each probe $k$ and output neuron $i$,
\begin{equation}
(\mathbf{S}^{\text{row-kernel}})_{i,k}
= \sum_{j=1}^{m} w_{j,k}\,\langle \mathbf{X}_{:\rightarrow i}, \mathbf{X}_{:\rightarrow j}\rangle.
\end{equation}
This can be viewed as: neuron $i$ receives messages from other output neurons $j$,
weighted by (i) their similarity $\langle \mathbf{X}_{:\rightarrow i}, \mathbf{X}_{:\rightarrow j}\rangle$ and
(ii) landmark coefficients $w_{j,k}$.
Hence, each column $\mathbf{w}_k$ selects a \emph{landmark mixture} of output neurons, and the response summarizes how similar neuron $i$ is to that mixture.

\textbf{Column-kernel probing.}
With $\mathbf{Z}=[\mathbf{z}_1,\dots,\mathbf{z}_r]\in\mathbb{R}^{n\times r}$,
\begin{equation}
\mathbf{S}^{\text{col-kernel}} = \mathbf{K}_{\text{col}}\mathbf{Z} = \mathbf{X}^{\top}\mathbf{X}\mathbf{Z} \in\mathbb{R}^{n\times r},
\end{equation}
and for each input feature $p$,
\begin{equation}
(\mathbf{S}^{\text{col-kernel}})_{p,k}
= \sum_{q=1}^{n} z_{q,k}\,\langle \mathbf{X}_{p\rightarrow :}, \mathbf{X}_{q\rightarrow :}\rangle.
\end{equation}
Thus, feature $p$ aggregates similarity-based messages from other features $q$ on the input-feature similarity graph.

First-order branches ($\mathbf{X}\mathbf{U}$, $\mathbf{X}^{\top}\mathbf{V}$) quantify alignment of incoming/outgoing weight patterns to learned directions.
Second-order branches ($\mathbf{X}\mathbf{X}^{\top}\mathbf{W}$, $\mathbf{X}^{\top}\mathbf{X}\mathbf{Z}$) additionally encode the \emph{intrinsic correlation topology} among neurons/features by propagating over Gram-induced similarity graphs.
This is precisely the intuition illustrated in~\Fref{fig:neuron}.

\section{Algorithm Pseudocode}
\label{app:algorithm}
Algorithm~\ref{alg:mvprobe} summarizes the complete \ours{} forward pass, from the four probing branches through per-branch standardization, branch-specific projection, fusion, and classification.
For completeness, Table~\ref{tab:architecture_dims} lists the weight-matrix dimensions of the best-performing layer used for each architecture.
\begin{algorithm}[!htbp]
\caption{MVProbe: Multi-View Probing for Weight-Space Learning}
\label{alg:mvprobe}
\begin{algorithmic}[1]
\Require Weight matrix $\mathbf{X} \in \mathbb{R}^{m \times n}$, probe matrices $\mathbf{U}, \mathbf{Z} \in \mathbb{R}^{n \times r}$, $\mathbf{V}, \mathbf{W} \in \mathbb{R}^{m \times r}$, branch projections $\{\pi_i\}_{i=1}^{4}$, shared encoder $\psi$, classifier $\phi$
\Ensure Predicted label vector $\hat{\mathbf{y}} \in \mathbb{R}^C$

\State \textbf{// First-order branches}
\State $\mathbf{S}^{\text{row}} \leftarrow \mathbf{X}\mathbf{U} \in \mathbb{R}^{m \times r}$ \Comment{Row probing}
\State $\mathbf{S}^{\text{col}} \leftarrow \mathbf{X}^\top \mathbf{V} \in \mathbb{R}^{n \times r}$ \Comment{Column probing}

\State \textbf{// Second-order branches (efficient computation)}
\State $\mathbf{S}^{\text{row-kernel}} \leftarrow \mathbf{X}(\mathbf{X}^\top \mathbf{W}) \in \mathbb{R}^{m \times r}$ \Comment{Row kernel probing}
\State $\mathbf{S}^{\text{col-kernel}} \leftarrow \mathbf{X}^\top(\mathbf{X}\mathbf{Z}) \in \mathbb{R}^{n \times r}$ \Comment{Column kernel probing}

\State \textbf{// Per-sample standardization for each branch}
\For{$\mathbf{S} \in \{\mathbf{S}^{\text{row}}, \mathbf{S}^{\text{col}}, \mathbf{S}^{\text{row-kernel}}, \mathbf{S}^{\text{col-kernel}}\}$}
    \State $\tilde{\mathbf{S}} \leftarrow \dfrac{\mathbf{S} - \mu(\mathbf{S})}{\sigma(\mathbf{S}) + \epsilon}$
\EndFor

\State \textbf{// Branch-specific projection}
\For{$i \in \{1, 2, 3, 4\}$}
    \State $\mathbf{f}_i \leftarrow \text{ReLU}(\pi_i(\tilde{\mathbf{S}}^{(i)})) \in \mathbb{R}^{d}$
\EndFor

\State \textbf{// Feature fusion and classification}
\State $\mathbf{f} \leftarrow [\mathbf{f}_1; \mathbf{f}_2; \mathbf{f}_3; \mathbf{f}_4] \in \mathbb{R}^{4d}$
\State $\hat{\mathbf{y}} \leftarrow \phi(\psi(\mathbf{f})) \in \mathbb{R}^C$

\State \textbf{// Training objective}
\State $\mathcal{L} \leftarrow \mathcal{L}_{\text{BCE}}(\hat{\mathbf{y}}, \mathbf{y})$
\State \textbf{return} $\hat{\mathbf{y}}$
\end{algorithmic}
\end{algorithm}

\begin{table}[!htbp]
\centering
\caption{\textbf{Matrix dimensions ($\mathbf{X}^{m \times n}$) of the best-performing layer for each architecture.} Specifically, we use $l=67, 59, 64, 47$ for ResNet, SupViT, MAE, DINO and $l=46$ for \textbf{$SD_{200}$} and \textbf{$SD_{1k}$}.}
\label{tab:architecture_dims}
\resizebox{\columnwidth}{!}{
\begin{tabular}{ccccc}
\toprule
\textbf{ResNet} & \textbf{SupViT} & \textbf{MAE} & \textbf{DINO} & \textbf{$\mathbf{SD}_{200}$ \& $\mathbf{SD}_{1k}$} \\
\midrule
$1024 \times 256$ & $768 \times 768$ & $768 \times 768$ & $768 \times 3072$ & $1280 \times 1280$ \\
\bottomrule
\end{tabular}
}
\end{table}

\section{Empirical Relevance of Theorem~\ref{thm:second_order}}
\label{app:ovl_recovery}

Beyond the exact-equality construction in Theorem~\ref{thm:second_order}, near-collisions also matter empirically. We measure two quantities on \emph{failure layers}---layers where ProbeX accuracy drops below 57\% on Model Jungle or below 26\% on Stable Diffusion: (i) the \textbf{pairwise overlap} between cosine-similarity distributions of positive and negative pairs, where a lower overlap coefficient (OVL) indicates better separation, and (ii) the \textbf{recovery rate} on samples that the row-side first-order probe $\mathbf{X}\mathbf{U}$ alone fails on. Tables~\ref{tab:ovl_mj}--\ref{tab:recovery_sd} together show that adding second-order branches sharply reduces representation collisions and rescues failure cases that single-view first-order probing misses.

\textbf{Notation.} Let $g$ be a branch representation (e.g., $\mathbf{X}\mathbf{U}$, second-order, or full \ours{}); $\mathrm{NN}_k^{g}(x)$: $k$-th cosine-similarity neighbor under $g$; $J(a,b)$: label-set Jaccard similarity on Model Jungle. $\mathrm{OVL} = \int \min(f_+,f_-)\,ds$~\cite{inman1989overlapping}; positive pairs: $J\!\geq\!0.5$ on Model Jungle and same class on Stable Diffusion; negative pairs: $J\!\leq\!0.25$ on Model Jungle and different class on Stable Diffusion. AUROC: area under the receiver operating characteristic for positive-versus-negative classification. The rescue rate on Model Jungle is the fraction of bottom-20\% $\mathbf{X}\mathbf{U}$-Jaccard samples whose $\mathrm{NN}_1^{g_t}$ Jaccard improves by at least $0.05$; the top-5 rescue rate on Stable Diffusion is the fraction of $\mathbf{X}\mathbf{U}$-misclassified samples whose top-5 $\mathrm{NN}^{g_t}$ contains the correct class.

\begin{table}[!htbp]
\centering
\caption{\textbf{Pairwise embedding overlap on Model Jungle failure layers.} We report the overlap coefficient (OVL; lower is better) and AUROC (higher is better) between cosine-similarity distributions of positive and negative pairs, evaluated on layers where ProbeX accuracy drops below 57\%. Best results are in \textbf{bold}.}
\label{tab:ovl_mj}
\scriptsize
\setlength{\tabcolsep}{4pt}
\begin{tabular}{@{}ll cccc@{}}
\toprule
\textbf{Layer} & & $\mathbf{X}\mathbf{U}$ & 1st & 2nd & Full \\
\midrule
\multirow{2}{*}{ResNet L57}
  & OVL$\,\downarrow$ & 0.014 & 0.005 & 0.112 & \textbf{0.000} \\
  & AUC$\,\uparrow$ & 0.989 & 0.997 & 0.786 & \textbf{0.998} \\
\addlinespace
\multirow{2}{*}{SupViT L66}
  & OVL$\,\downarrow$ & \textbf{0.003} & 0.155 & 0.060 & 0.060 \\
  & AUC$\,\uparrow$ & \textbf{0.998} & 0.778 & 0.921 & 0.959 \\
\addlinespace
\multirow{2}{*}{DINO L49}
  & OVL$\,\downarrow$ & 0.761 & 0.639 & \textbf{0.084} & 0.578 \\
  & AUC$\,\uparrow$ & 0.559 & 0.630 & \textbf{0.758} & 0.639 \\
\addlinespace
\multirow{2}{*}{MAE L42}
  & OVL$\,\downarrow$ & 0.192 & 0.148 & 0.195 & \textbf{0.146} \\
  & AUC$\,\uparrow$ & 0.726 & \textbf{0.793} & 0.493 & 0.671 \\
\bottomrule
\end{tabular}
\end{table}

\begin{table}[!htbp]
\centering
\caption{\textbf{Pairwise embedding overlap on SD LoRA failure layers.} OVL (lower is better) and AUROC (higher is better) between same-class and different-class cosine-similarity distributions, evaluated on layers where ProbeX accuracy drops below 26\%. Best results are in \textbf{bold}.}
\label{tab:ovl_sd}
\scriptsize
\setlength{\tabcolsep}{3pt}
\begin{tabular}{@{}ll ccccc@{}}
\toprule
\textbf{Setting} & & $\mathbf{X}\mathbf{U}$ & 2nd & Full & $\mathbf{X}\mathbf{X}^\top\!\mathbf{U}$ & $\mathbf{X}^\top\!\mathbf{X}\mathbf{U}$ \\
\midrule
\multirow{2}{*}{SD\_200 L110 (In-Dist)}
  & OVL$\,\downarrow$ & 0.511 & 0.108 & 0.126 & \textbf{0.086} & 0.904 \\
  & AUC$\,\uparrow$ & 0.816 & 0.988 & 0.979 & \textbf{0.989} & 0.500 \\
\addlinespace
\multirow{2}{*}{SD\_200 L110 (Zero-shot)}
  & OVL$\,\downarrow$ & 0.587 & \textbf{0.305} & 0.309 & 0.325 & 0.951 \\
  & AUC$\,\uparrow$ & 0.771 & \textbf{0.926} & 0.923 & 0.917 & 0.498 \\
\addlinespace
\multirow{2}{*}{SD\_1k L94 (In-Dist)}
  & OVL$\,\downarrow$ & 0.378 & 0.058 & \textbf{0.036} & 0.052 & 0.814 \\
  & AUC$\,\uparrow$ & 0.855 & 0.994 & \textbf{0.996} & 0.995 & 0.507 \\
\addlinespace
\multirow{2}{*}{SD\_1k L94 (Zero-shot)}
  & OVL$\,\downarrow$ & 0.469 & 0.095 & \textbf{0.086} & 0.095 & 0.919 \\
  & AUC$\,\uparrow$ & 0.846 & 0.987 & \textbf{0.989} & 0.987 & 0.501 \\
\bottomrule
\end{tabular}
\end{table}

\begin{table}[!htbp]
\centering
\caption{\textbf{Local-neighborhood recovery on Model Jungle.} Rescue rate (\%) on the bottom-20\% of samples by $\mathbf{X}\mathbf{U}$-Jaccard similarity. $|\mathcal{F}_s|$ denotes the source-hard subset size.}
\label{tab:recovery_mj}
\small
\begin{tabular}{@{}l c cc@{}}
\toprule
\textbf{Layer} & $|\mathcal{F}_s|$ & $\mathbf{X}\mathbf{U}{\to}$2nd & $\mathbf{X}\mathbf{U}{\to}$Full \\
\midrule
ResNet L57 & 54 & 22.2 & 51.9 \\
SupViT L66 & 59 & 33.9 & 32.2 \\
DINO L49   & 43 & 53.5 & 46.5 \\
MAE L42    & 45 & 53.3 & 53.3 \\
\bottomrule
\end{tabular}
\end{table}

\begin{table}[!htbp]
\centering
\caption{\textbf{Local-neighborhood recovery on SD LoRA.} Top-5 rescue rate (\%) on samples that $\mathbf{X}\mathbf{U}$ top-1 misclassifies. $|\mathcal{F}_s|$ denotes the source-hard subset size.}
\label{tab:recovery_sd}
\small
\begin{tabular}{@{}l c cc@{}}
\toprule
\textbf{Setting} & $|\mathcal{F}_s|$ & $\mathbf{X}\mathbf{U}{\to}$2nd & $\mathbf{X}\mathbf{U}{\to}$Full \\
\midrule
SD\_200 L110 (In-Dist)   & 472 & 89.0 & 87.1 \\
SD\_200 L110 (Zero-shot) & 429 & 94.2 & 90.7 \\
SD\_1k L94 (In-Dist)     & 491 & 35.0 & 35.4 \\
SD\_1k L94 (Zero-shot)   & 438 & 92.9 & 93.4 \\
\bottomrule
\end{tabular}
\end{table}

\section{Efficiency Analysis}
\label{app:efficiency_analysis}

We profile training time, throughput, and peak GPU memory across all evaluated weight matrix shapes (Table~\ref{tab:efficiency}). Implementation uses the official ProbeX codebase with standard PyTorch operations only; no custom CUDA kernels are used.

\begin{table}[!htbp]
\centering
\caption{\textbf{Runtime and memory comparison on RTX 3090 (24\,GiB).} We report wall-clock training time per epoch (s/epoch), throughput (samples/s), and peak GPU memory (MiB) across all evaluated weight matrix shapes. ViT avg denotes the mean over SupViT, MAE, and DINO (all $768{\times}768$).}
\label{tab:efficiency}
\small
\setlength{\tabcolsep}{3pt}
\resizebox{\columnwidth}{!}{%
\begin{tabular}{@{}ll c r r r@{}}
\toprule
\textbf{Family} & \textbf{Method} & \textbf{Shape ($m{\times}n$)} & \textbf{s\,/\,epoch} & \textbf{Throughput} & \textbf{Peak Mem} \\
\midrule
\multirow{3}{*}{ResNet}
  & ProbeX        & $1024{\times}256$  & 1.19 & 593  & 412 \\
  & PX($\times$4) & $1024{\times}256$  & 1.28 & 550  & 1205 \\
  & MVProbe       & $1024{\times}256$  & 1.31 & 535  & 1513 \\
\addlinespace
\multirow{3}{*}{ViT avg}
  & ProbeX        & $768{\times}768$   & 1.86 & 375  & 540 \\
  & PX($\times$4) & $768{\times}768$   & 1.88 & 369  & 1239 \\
  & MVProbe       & $768{\times}768$   & 2.05 & 339  & 2172 \\
\addlinespace
\multirow{3}{*}{SD\_200}
  & ProbeX        & $1280{\times}1280$ & 1.74 & 2009 & 2676 \\
  & PX($\times$4) & $1280{\times}1280$ & 1.85 & 1895 & 3069 \\
  & MVProbe       & $1280{\times}1280$ & 2.58 & 1358 & 5152 \\
\addlinespace
\multirow{3}{*}{SD\_1k}
  & ProbeX        & $1280{\times}1280$ & 2.29 & 1566 & 2679 \\
  & PX($\times$4) & $1280{\times}1280$ & 2.50 & 1400 & 3072 \\
  & MVProbe       & $1280{\times}1280$ & 2.87 & 1220 & 5156 \\
\bottomrule
\end{tabular}%
}
\end{table}

\section{Fusion Strategy Sensitivity}
\label{app:fusion_sensitivity}

We test whether \ours{}'s gain hinges on a particular fusion choice. We compare four configurations: per-sample standardization or per-branch $L_2$ normalization, each combined with simple concatenation, learned group weighting between the first- and second-order groups, or learned per-branch weighting. As shown in Table~\ref{tab:fusion_sensitivity}, per-branch $L_2$ normalization is clearly insufficient (mean accuracy 73.40), and learned adaptive weighting never outperforms simple concatenation while being unstable across architecture families (the group weights swing from 0.876/0.124 on ResNet to 0.562/0.438 on DINO). The key design issue is pre-fusion branch conditioning (Theorem~\ref{thm:scale}, Corollary~\ref{cor:std}), not fusion complexity.

\begin{table}[!htbp]
\centering
\caption{\textbf{Fusion sensitivity across architectures} (accuracy \%). \textbf{Bold}\,$=$\,best per column.}
\label{tab:fusion_sensitivity}
\small
\setlength{\tabcolsep}{4pt}
\begin{tabular}{@{}l ccccc@{}}
\toprule
\textbf{Condition} & \textbf{ResNet} & \textbf{DINO} & \textbf{SupViT} & \textbf{MAE} & \textbf{Avg} \\
\midrule
Std + Concat (Ours) & \textbf{91.79} & \textbf{78.31} & 91.90 & \textbf{81.47} & \textbf{85.87} \\
Std + Group Weight  & 91.47 & 78.14 & \textbf{92.33} & 80.40 & 85.59 \\
Std + Branch Weight & 90.25 & 77.47 & 91.02 & 79.89 & 84.66 \\
L2 + Concat         & 83.47 & 62.59 & 75.48 & 72.06 & 73.40 \\
\bottomrule
\end{tabular}
\end{table}

\section{SD LoRA Failure-Layer Analysis}
\label{app:sd_failure}

We isolate \emph{failure layers}---SD LoRA layers where ProbeX accuracy drops below 26\%---to test whether second-order branches are decisive precisely where first-order probing breaks down. As shown in Table~\ref{tab:sd_failure}, all first-order variants collapse to below 21\% on these layers, while the full \ours{} maintains 62--90\% accuracy. This confirms that the multi-view combination is essential rather than redundant, and that the second-order branches recover information that first-order probing cannot represent in this regime.

\begin{table}[!htbp]
\centering
\caption{\textbf{Classification accuracy (\%) on SD LoRA failure layers} (ProbeX accuracy below 26\%). The first-order column reports the combined first-order pair $\mathbf{X}\mathbf{U}$ and $\mathbf{X}^{\top}\mathbf{V}$; the full column reports the complete four-branch \ours{}.}
\label{tab:sd_failure}
\small
\setlength{\tabcolsep}{6pt}
\begin{tabular}{@{}l ccc@{}}
\toprule
\textbf{Setting} & \textbf{ProbeX} & \textbf{First-order} & \textbf{Full} \\
\midrule
SD\_200 L110 (In-Dist)   & 16.0 & 13.6 & \textbf{88.8} \\
SD\_200 L110 (Zero-shot) & 23.6 & 18.6 & \textbf{61.7} \\
SD\_1k L94 (In-Dist)     &  9.4 &  9.8 & \textbf{89.4} \\
SD\_1k L94 (Zero-shot)   & 25.2 & 20.6 & \textbf{89.8} \\
\bottomrule
\end{tabular}
\end{table}

\section{Comparison on the Small-scale benchmark}
\label{app:small-scale_benchmark}
We provide an additional comparison against equivariant weight-space methods~\cite{zhang2023neural, navon2301equivariant, kofinas2024graph, lim2312graph} on the small-scale MNIST INR benchmark, where INR checkpoints trained on MNIST serve as input for predicting the corresponding labels. As \ours{} targets single-layer probing in large-scale settings, we adapt it to this benchmark, where probing all layers is feasible given the small weight matrices. The INR comprises three layers---$F:= \mathbf{X}_3 \circ \sigma \circ \mathbf{X}_2 \circ \sigma \circ \mathbf{X}_1$, where $\sigma$ is the activation---and we compute row probing as $F(\mathbf{U})$, column probing as $\mathbf{X}_i^{\top}\mathbf{V}$ at the best-performing layer, and second-order probing as $\mathbf{X}_i\mathbf{X}_i^{\top}\mathbf{W}$ and $\mathbf{X}_i^{\top}\mathbf{X}_i\mathbf{Z}$. For this benchmark, we set the projection dimension to $d{=}8$, the number of probes per branch to $r{=}128$, and the hidden dimension to $128$.
As shown in Table~\ref{tab:smallscale_comparison}, \ours{} achieves the best performance.
These results show that multi-view probing transfers gracefully beyond its intended large-scale, single-layer regime, supporting its use as a general-purpose probing framework.

\begin{table}[!htbp]
\centering
\caption{Comparison on MNIST INR. The best result in each column is in \textbf{bold} and the second best is \underline{underlined}. All training/inference times are measured on a single NVIDIA RTX 3090 GPU.}
\label{tab:smallscale_comparison}
\resizebox{\linewidth}{!}{%
\begin{tabular}{lcccr}
\toprule
\textbf{Method} & \textbf{Acc.} & \textbf{Train. (s/ep)} & \textbf{Infer. (s)} & \textbf{\# Params} \\
\midrule
Zhang et al.~\cite{zhang2023neural}      & \underline{96.28} & 395.60 & 0.99  & \underline{484,298}    \\
Navon et al.~\cite{navon2301equivariant} & 86.50 & 656.09 & 3.30  & 552,650    \\
Kofinas et al.~\cite{kofinas2024graph}   & 89.37 & 316.69 & 1.03  & \textbf{376,586}      \\
Lim et al.~\cite{lim2312graph}           & 90.23 & \textbf{95.00} & \textbf{0.44} & 716,426 \\
\rowcolor{oursrow}
MVProbe (Ours)                           & \textbf{97.20} & \underline{272.60} & \underline{0.50} & 605,716 \\
\bottomrule
\end{tabular}}
\end{table}

\end{document}